%% file: main.tex
\PassOptionsToPackage{table}{xcolor}  
\documentclass{article}

\usepackage{microtype}
\usepackage{graphicx}
\usepackage{subcaption}
\usepackage{booktabs} 

\usepackage{hyperref}

\usepackage{tikz}
\usepackage{makecell}

\usepackage[accepted]{icml2026}

\usepackage{amsmath}
\usepackage{amssymb}
\usepackage{mathtools}
\usepackage{amsthm}

\usepackage[capitalize,noabbrev]{cleveref} 

\theoremstyle{plain}
\newtheorem{theorem}{Theorem}[section]
\newtheorem{proposition}[theorem]{Proposition}
\newtheorem{lemma}[theorem]{Lemma}

\theoremstyle{definition}

\theoremstyle{remark}

\usepackage{multirow}
\usepackage[retain-zero-uncertainty=true]{siunitx}
\usepackage[version=4]{mhchem}
\usepackage{caption}
\usepackage{adjustbox}
\usepackage{comment}
\usepackage{nicefrac,xfrac}
\usepackage{placeins}
\usepackage{xcolor}

\newcommand{\KL}{D_{\mathrm{KL}}}

\DeclareMathOperator*{\argmin}{arg\,min}

\newcommand{\dd}{\mathrm{d}}

\icmltitlerunning{Efficient Training of Boltzmann Generators Using Off-Policy Log-Dispersion Regularization}

\begin{document}

\twocolumn[
\icmltitle{Efficient Training of Boltzmann Generators\\ Using Off-Policy Log-Dispersion Regularization}



\icmlsetsymbol{equal}{*}

\begin{icmlauthorlist}
\icmlauthor{Henrik Schopmans}{IAR,INT}
\icmlauthor{Christopher von Klitzing}{IAR}
\icmlauthor{Pascal Friederich}{IAR,INT}
\end{icmlauthorlist}

\icmlaffiliation{INT}{Institute of Nanotechnology, Karlsruhe Institute of Technology, Kaiserstr. 12, 76131 Karlsruhe, Germany}
\icmlaffiliation{IAR}{Institute for Anthropomatics and Robotics, Karlsruhe Institute of Technology, Kaiserstr. 12, 76131 Karlsruhe, Germany}

\icmlcorrespondingauthor{Pascal Friederich}{pascal.friederich@kit.edu}

\icmlkeywords{Machine Learning, ICML}

\vskip 0.3in
]



\printAffiliationsAndNotice{}  

\begin{abstract}
Sampling from unnormalized probability densities is a central challenge in
computational science. Boltzmann generators are generative models that enable
independent sampling from the Boltzmann distribution of physical systems at a
given temperature. However, their practical success depends on data-efficient
training, as both simulation data and target energy evaluations are costly. To
this end, we propose \textbf{off-policy log-dispersion regularization (LDR)}, a
novel regularization framework that builds on a generalization of the
log-variance objective. We apply LDR in the off-policy setting in combination
with standard data-based training objectives, without requiring additional
on-policy samples. LDR acts as a shape regularizer of the energy landscape by
leveraging additional information in the form of target energy labels. The
proposed regularization framework is broadly applicable, supporting unbiased or
biased simulation datasets as well as purely variational training without access
to target samples. Across all benchmarks, LDR improves both final performance
and data efficiency, with sample efficiency gains of up to one order of
magnitude.
\end{abstract}

\section{Introduction}
\label{sec:introduction}

Sampling from complex, high-dimensional probability distributions is a central
problem in computational physics, biology, and related areas. One such task is
sampling from the Boltzmann distribution of physical systems, where
$\tilde{p}_X(x) = \exp\!\left( \frac{-E(x)}{k_\text{B} T} \right)$ is an
unnormalized density, $x \in X$ is the configuration (e.g., particle positions
in $\mathbb{R}^{3N}$), $E$ an energy function, $T$ the temperature, and
$k_\text{B}$ the Boltzmann constant. The normalized distribution is $p_X(x) =
\nicefrac{\tilde{p}_X(x)}{\mathcal{Z}}$ with $\mathcal{Z} = \int_{X_0}
\tilde{p}_X(x) \, \dd x$, where the integral is taken over $X_0$ corresponding
to configurations with vanishing center of mass. The normalization constant
$\mathcal{Z}$ is usually unknown and intractable.

Sampling from the Boltzmann distribution enables the exploration of typical
configurations and estimating physically relevant expectations, e.g., in drug
discovery where protein-ligand binding free energies require extensive
exploration of thermodynamic ensembles
\cite{hollingsworthMolecularDynamicsSimulation2018}. Classical sampling
techniques such as molecular dynamics (MD)
\cite{alderStudiesMolecularDynamics1959} and Markov chain Monte Carlo (MCMC)
\cite{metropolisEquationStateCalculations1953} are asymptotically correct but
typically rely on long, correlated trajectories. As dimensionality and energetic
complexity increase, these methods can become prohibitively expensive,
particularly when energy evaluations are costly.

Generative modeling offers a complementary perspective by framing sampling as a
one-shot generation problem. Boltzmann generators
\cite{noeBoltzmannGeneratorsSampling2019a} enable independent sampling from
equilibrium distributions at a given temperature, making them an attractive
alternative or complement to trajectory-based methods. However, realizing a
practical advantage crucially depends on data-efficient training procedures.
Standard objectives for training Boltzmann generators typically rely on
equilibrium samples from the target distribution
\cite{tanAmortizedSamplingTransferable2025b,kleinTransferableBoltzmannGenerators2024a}
or require extensive energy evaluations in the variational energy-based training
setting
\cite{schopmansTemperatureAnnealedBoltzmannGenerators2025,klitzingLearningBoltzmannGenerators2025a},
both of which may be limiting in realistic settings.

To address these limitations, we introduce \textbf{off-policy log-dispersion
regularization (LDR)}, a general regularization framework for training Boltzmann
generators more efficiently. LDR builds on a generalization of the log-variance
objective, which has previously been used as a divergence for
on-policy\footnote{Using samples from the model distribution for training.}
variational training \cite{richterImprovedSamplingLearned2023}. In contrast, we
apply LDR off-policy on fixed datasets as a regularizer on top of standard
data-based objectives. LDR uses target energy labels to regularize the shape of
the learned energy landscape. Importantly, LDR can be evaluated over arbitrary
reference distributions and can therefore leverage biased simulation data or
auxiliary energy-labeled datasets that are otherwise difficult to incorporate.

The proposed framework applies to training on unbiased equilibrium data, biased
simulation data, and even purely energy-based variational training without
access to target samples. Across diverse benchmarks, we show that LDR
significantly improves final model performance and data efficiency.

We summarize our contribution as follows:

\begin{itemize}
    \item We adapt the log-variance objective from on-policy variational
    training to the off-policy setting, where it acts as a regularizer
    alongside standard data-based objectives for training on fixed, energy-labeled
    datasets.
    \item We generalize the log-variance objective to a family of
    log-dispersion objectives, including an L1 variant that is less sensitive
    to energy outliers and provides more stable optimization for Boltzmann
    generators.
    \item We show that log-dispersion regularization significantly increases
    final performance and data efficiency, both when training on unbiased and
    biased datasets.
    \item We show that log-dispersion regularization can also be adapted for
    energy-based variational training without target samples, improving the
    efficiency of the current state-of-the-art method, Constrained Mass
    Transport (CMT) \cite{klitzingLearningBoltzmannGenerators2025a}, by up to a
    factor of \num{10}.
\end{itemize}

\begin{figure*}[!htbp]
  \vskip 0.2in
  \begin{center}
  \centerline{\includegraphics{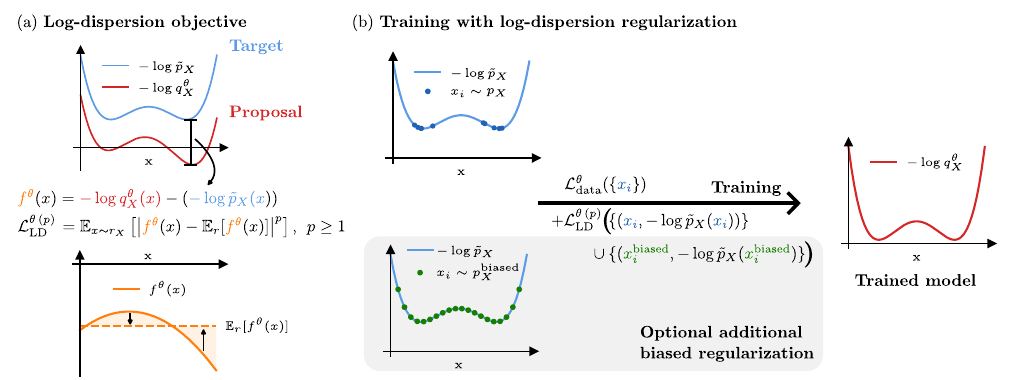}} \caption{(a) The
    log-dispersion objective minimizes the dispersion of $f^\theta(x)=-\log
    q_X^\theta(x) -(-\log \tilde{p}_X(x))$ around its mean, regularizing the
    shape of the proposal $q_X^\theta$ over the support of a reference
    distribution $r_X$. (b) When training off-policy, e.g., with a fixed
    dataset, log-dispersion alone is not a divergence because it does not
    constrain the proposal outside the support of $r_X$. We therefore use
    log-dispersion as a shape regularizer on top of data-based divergences to
    ensure global normalization. Since the reference distribution $r_X$ is
    flexible, LDR can be applied to target samples $x_i\sim p_X$ as well as
    biased samples $x_i^{\text{biased}}\sim p_X^{\text{biased}}$.}
  \label{fig:main_overview}
  \end{center}
\end{figure*} 

\section{Related Work}
 
Boltzmann generators were first introduced as an approach to obtain independent
samples from Boltzmann distributions by learning an invertible generative model
that supports exact likelihoods and importance reweighting
\cite{noeBoltzmannGeneratorsSampling2019a}. Closely related, Boltzmann emulators
focus on learning data-driven models of approximate equilibrium ensembles,
without exact reweighting guarantees
\cite{jingAlphaFoldMeetsFlow2024a,lewisScalableEmulationProtein2025,zhengPredictingEquilibriumDistributions2024}.

Since their introduction, many works have studied Boltzmann generators trained
from simulation data using standard data-based objectives,
including discrete normalizing flows\footnote{Here and throughout, ``discrete''
refers to normalizing flows parameterized as finite compositions of explicitly
invertible transformations, such as coupling-based or autoregressively
structured flows.}
\cite{midgleySE3EquivariantAugmented2023,tanScalableEquilibriumSampling2025a,rehmanEfficientRegressionBasedTraining2025},
flow matching approaches
\cite{kleinEquivariantFlowMatching2023b,rehmanFALCONFewstepAccurate2025b}, and
diffusion-based Boltzmann generators \cite{zhangEfficientUnbiasedSampling2025a}.
Several works investigate transferability, training on some molecular systems
and transferring to unseen ones to amortize simulation cost
\cite{jingTorsionalDiffusionMolecular2022,abdinDirectConformationalSampling2024,kleinTransferableBoltzmannGenerators2024a,tanAmortizedSamplingTransferable2025b}. 

Beyond data-based training, Boltzmann generators can also be learned
variationally using only target energy evaluations, without access to target
samples. Variational methods based on discrete normalizing flows have advanced
substantially in this regime
\cite{stimperResamplingBaseDistributions2022,midgleyFlowAnnealedImportance2022a,schopmansTemperatureAnnealedBoltzmannGenerators2025,klitzingLearningBoltzmannGenerators2025a},
with recent approaches currently achieving strong performance for variational
training in molecular systems \cite{klitzingLearningBoltzmannGenerators2025a}.
Diffusion-based variational counterparts have also emerged
\cite{liuAdjointSchrodingerBridge2025b,choiNonequilibriumAnnealedAdjoint2025,kimScalableEfficientTraining2025a},
but have so far been less competitive on molecular benchmarks; notably,
variational diffusion training on alanine dipeptide was only recently
demonstrated \cite{liuAdjointSchrodingerBridge2025b}, whereas flow-based
approaches reached this earlier and have since scaled to larger systems
\cite{midgleyFlowAnnealedImportance2022a,klitzingLearningBoltzmannGenerators2025a}.

Finally, several recent works explore training on data at elevated temperatures
and annealing toward lower temperatures, combining generative modeling with
tempering or progressive annealing schedules
\cite{dibakTemperatureSteerableFlows2022,wahlTRADETransferDistributions2025,schopmansTemperatureAnnealedBoltzmannGenerators2025,rissanenProgressiveTemperingSampler2025c,akhound-sadeghProgressiveInferenceTimeAnnealing2025a}.

A key limitation across related work is data efficiency: in data-based
approaches, the training signal typically comes from matching the sample
distribution and does not directly leverage target energy values typically
available through dataset generation. Likewise, in variational training, current
state-of-the-art methods optimize objectives that do not include a direct
training signal from target energy labels
\cite{schopmansTemperatureAnnealedBoltzmannGenerators2025,klitzingLearningBoltzmannGenerators2025a}.
Motivated by this gap, we introduce LDR, which improves data efficiency across a
broad range of training settings. Similar to our work,
\citet{vaitlFastUnifiedPath2023,vaitlPathGradientsFlow2025a} recently made an
important step by improving the forward KL with a path-gradient formulation,
yielding a lower-variance learning signal that incorporates gradients of the
target density. Compared to this approach, LDR (i) is a general off-policy
regularization framework that can be evaluated on arbitrary reference
distributions (including biased or auxiliary energy-labeled datasets), and (ii)
directly leverages target energies rather than target gradients. Finally, LDR is
complementary to path gradients and can be used on top of the path-gradient
forward KL as an additional energy-based regularizer.

\section{Method}
\label{sec:method}

To fit Boltzmann generators to a data distribution, they are typically trained
with data-based objectives that do not use target energy values, even though
these are usually available when constructing the dataset. Discrete normalizing
flows are commonly trained via the forward KL divergence, continuous normalizing
flows via flow matching, and diffusion models via score matching. While
effective, these objectives rely solely on samples and ignore structure provided
by the target energy landscape.

One way to incorporate target energy information is to directly compare the model
log-density $\log q_X^\theta(x)$ to the unnormalized target log density $\log
\tilde{p}_X(x)$. We define $f^\theta(x)$ as the difference of the negative log
densities (unnormalized log importance weights),

\begin{align}
f^\theta(x) &= -\log q_X^\theta(x) - (-\log \tilde{p}_X(x)) \notag \\
&= - \log q_X^\theta(x) -\frac{E(x)}{k_\text{B} T} \, .
\end{align}

At the optimum we have $q_X^\theta = p_X$, and thus $f^\theta(x) = \log
\mathcal{Z} = \text{const.}$ (see Figure~\ref{fig:main_overview}a for an
illustration). To obtain this optimum, the log-variance objective
\cite{richterVarGradLowVarianceGradient2020b} minimizes the variance of
$f^\theta(x)$ under a reference distribution $r_X(x)$:

\begin{align}
\mathcal{L}_{\mathrm{LV}}^{\theta} = \mathbb{E}_{r_X}\!\left[(f^\theta(x)-\mathbb{E}_{r_X}[f^\theta(x)])^2\right]
\end{align}

We generalize this log-variance objective to a family of log-dispersion
objectives that minimize the dispersion of $f^\theta(x)$ around its mean,

\begin{align}
\mathcal{L}_{\mathrm{LD}}^\theta
&= \mathbb{E}_{r_X}\!\left[\rho\!\left(f^\theta(x)-c^\theta\right)\right],
\qquad
c^\theta = \mathbb{E}_{r_X}\!\left[f^\theta(x)\right],
\label{eq:LDR}\\
\rho(u) &\ge 0,
\qquad
\rho(u)=0 \Longleftrightarrow u=0. \notag
\end{align}

In our main experiments, we focus on $\rho(u)=|u|^p$ ($p \ge 1)$, which yields
$\mathcal{L}_{\mathrm{LD}}^{\theta\,(p)} = \mathbb{E}_{r_X}\!\left[ \left|
f^\theta(x)-c^\theta \right|^p \right] $. We investigate LDR-L2 and LDR-L1,
obtained by choosing $p=2$ and $p=1$, respectively. LDR-L2 recovers the
log-variance objective and minimizes the second central moment (variance), while
LDR-L1 minimizes the first absolute central moment. We hypothesize that LDR-L1
is less sensitive to outliers and numerically more stable, an important property
for Boltzmann generators, where energy values can span multiple orders of
magnitude. We compare additional log-dispersion variants in
Appendix~\ref{sec:additional_LD}. Some related diffusion sampling works replace
the mean $c^\theta$ in Equation~\ref{eq:LDR} with a learnable scalar parameter
(trajectory balance objective) \cite{senderaImprovedOffpolicyTraining2024}.
Here, we simply use the batch-wise mean and backpropagate through both
$f^\theta(x)$ and $c^\theta$.

If $r_X(x)$ has full support, a member of the log-dispersion family defines a
divergence whose minimum is achieved if and only if $q_X^\theta = p_X$, assuming
that $q_X^\theta$ is properly normalized. However, in practice, the
configurations of a molecular system lie on a lower-dimensional manifold within
the full space. Thus, using a fixed dataset as $r_X(x)$ typically yields a
reference without full support, leaving the model unconstrained and arbitrarily
normalized outside the data manifold (see Appendix~\ref{appendix:no_forward_kl}
for an empirical demonstration and Appendix~\ref{appendix:theory} for a detailed
theoretical analysis). Instead of using a fixed reference, related work on
on-policy training with the log-variance objective chooses an adaptive on-policy
reference distribution, $r_X(x)=q_X^\theta(x)$, but this requires repeated
target energy evaluations and is prone to mode collapse in the finite-sample
regime. 

To avoid extra energy evaluations, we propose to deliberately demote the
log-dispersion objective to a regularizer, combining it with a standard
data-based objective:

\begin{align}
\mathcal{L}^\theta = \lambda_\text{data} \mathcal{L}_{\mathrm{data}}^\theta + \lambda_\text{LD} \mathcal{L}_{\mathrm{LD}}^\theta \, .
\end{align}

Figure~\ref{fig:main_overview}b illustrates the training procedure using this
log-dispersion regularization. The data-based term ensures correct normalization
and convergence to the correct target distribution over the full space, while
the log-dispersion term acts as a shape regularizer that aligns the proposal
with the target energy landscape by minimizing dispersion in the log importance
weights. This hybrid formulation provides an additional training signal through
the incorporation of target energy labels. Importantly, LDR can be evaluated
over arbitrary reference distributions and is not restricted to the target
distribution itself. Furthermore, in contrast to on-policy training with
log-dispersion objectives, our approach solely relies on a fixed dataset with
target energy labels, as it is usually already available after performing
unbiased or biased MD simulations. It does not have to use additional target energy
evaluations during training.

\section{Experiments}
\subsection{Experimental Setup}

\paragraph{Architecture}
One key property of Boltzmann generators is the ability to asymptotically unbias
the generated distribution using importance weights
$w(x)=\frac{\tilde{p}_X(x)}{q_X^\theta(x)}$. For some observable $h(x)$ it holds
\cite{martinoEffectiveSampleSize2017,noeBoltzmannGeneratorsSampling2019a}

\begin{align}
    \sum_{n=1}^N \frac{w(x_n)}{\sum_{i=1}^N w(x_i)} h\left(x_n\right) \xrightarrow[N \to \infty]{} \int h(x) p_X(x) \, \dd x \, . \label{eq:importance_sampling}
\end{align}

While generative approaches such as continuous normalizing flows and diffusion
models have shown strong performance, correcting their sampling bias via
importance sampling is typically computationally expensive
\cite{tanScalableEquilibriumSampling2025a,kleinTransferableBoltzmannGenerators2024a}.
Recent works focused on cheaper unbiasing of continuous normalizing flows or
diffusion models
\cite{zhangEfficientUnbiasedSampling2025a,pengFlowPerturbationAccelerate2025},
though no general solution is currently available. In contrast, discrete
normalizing flows provide exact and inexpensive likelihood evaluations by
construction. This property is especially critical in the context of Boltzmann
generators, where unbiased estimates of thermodynamic quantities are required.
For these reasons, we adopt discrete normalizing flows as the generative
backbone in this work.

For individual molecular systems, normalizing flows defined on internal
coordinate representations currently achieve state-of-the-art performance in
both data-based and variational training regimes
\cite{kimScalableNormalizingFlows2024a,klitzingLearningBoltzmannGenerators2025a}.
However, the choice of coordinate representation involves an inherent tradeoff:
internal coordinates are system-specific and non-unique, which limits their
transferability across different molecules. In contrast, Boltzmann generators
operating directly on Cartesian coordinates offer a more flexible and
transferable framework, but their performance still falls short of what can be
achieved using internal coordinate representations
\cite{kleinTransferableBoltzmannGenerators2024a,tanAmortizedSamplingTransferable2025b}.

We use a two-step experimental setup. First, we evaluate LDR using normalizing
flows in an internal coordinate representation
(Sections~\ref{sec:training_unbiased_datasets},
\ref{sec:training_biased_datasets}, and \ref{sec:variational_training}), which
are substantially cheaper to train and evaluate and yield stronger final
performance. We focus on controlled single-system benchmarks rather than
transferability, where the non-uniqueness of internal coordinates would be a
limiting factor. Second, to show that LDR is not tied to a particular coordinate
choice, we report experiments in Cartesian coordinates in
Section~\ref{sec:cart_coords}. For architecture details, see
Appendix~\ref{appendix:architecture}.

\paragraph{Target densities}
To benchmark LDR, we first use alanine dipeptide ($d=60$), which is a
well-studied benchmark system in prior work
\cite{dibakTemperatureSteerableFlows2022,stimperResamplingBaseDistributions2022,midgleyFlowAnnealedImportance2022a,tanAmortizedSamplingTransferable2025b}.
We further use the larger alanine
hexapeptide ($d=180$)
\cite{schopmansTemperatureAnnealedBoltzmannGenerators2025,klitzingLearningBoltzmannGenerators2025a}
to test the scalability of our method. Both systems have complex metastable
regions that occupy only a small part of the overall state space, making them
well suited as challenging benchmarks.

\input{tables/results_unbiased.tex}

\paragraph{Metrics} 
As metrics, we first use the negative log-likelihood defined as
$\text{NLL}=-\mathbb{E}_{x \sim p_X(x)}\left[\log q_X^\theta(x)\right]$. The NLL
is equivalent to the forward KL divergence up to an additive constant, and thus
provides a reliable signal for mode collapse
\cite{blessingELBOsLargeScaleEvaluation2024} and model performance.

We additionally report the effective sample size (ESS), which quantifies how
many independent draws from the target distribution $p_X$ would be required to
attain the same estimator variance as that obtained using samples from the flow
distribution $q_X^\theta$ \cite{martinoEffectiveSampleSize2017}. In practice, we
compute the reverse ESS from flow samples \cite{martinoEffectiveSampleSize2017}.
Since the reverse ESS is restricted to the support of $q_X^\theta$, a large ESS
can occur under mode collapse; we therefore always assess it jointly with NLL.

We introduce and report additional metrics in
Appendix~\ref{appendix:section:metrics}. In the main manuscript, consistent with
recent work
\cite{kleinTransferableBoltzmannGenerators2024a,vaitlPathGradientsFlow2025a}, we
focus on NLL and ESS, which we found most reliable for assessing relative
performance. However, NLL is not normalized, and its absolute scale is not
directly interpretable, so even small differences can be significant. We
therefore use NLL primarily to rank methods and refer to the metrics in the
appendix when additional nuance is required.

We further refer to Figure~\ref{fig:all_ramachandran_tica_overview} in
Appendix~\ref{sec:appendix:visualizations} for a visualization of the 2D
marginal densities of the main degrees of freedom of each system (Ramachandran
and TICA plots).

\subsection{Training on Unbiased Datasets} \label{sec:training_unbiased_datasets}
We start with the most straightforward setup, training on an unbiased dataset
that follows the target distribution and includes target energy labels,
$\mathcal D_n = \{(x_1,E(x_1)),\dots,(x_n,E(x_n))\}, \, x_i
\sim p_X$. Datasets used to train Boltzmann generators are
typically obtained from MD simulations, so target energy values already have to
be computed during dataset creation. We can therefore apply energy
regularization without extra cost, using $\mathcal D_n$ as the reference
distribution of the LD objective.

To test the effect of LDR in different data regimes, we train on both alanine
dipeptide and alanine hexapeptide using datasets of increasing size, i.e.
\num{1e6}, \num{2e6} and \num{5e6} samples. As a baseline, we use forward KL
training without regularization, and compare it against models trained with
LDR-L1 and LDR-L2 regularization.

In addition to the molecular benchmarks, we include a simple 2D Gaussian mixture
model. Here, we provide the path-gradient forward KL
\cite{vaitlFastUnifiedPath2023} as an additional baseline, which uses gradients
of the target density. For the molecular systems in internal coordinates, we do
not include the path-gradient forward KL baseline, since we did not observe
stable training in this setting; we include a detailed discussion and analysis
in Appendix~\ref{appendix:path_gradients}.

\input{tables/results_biased.tex}

\paragraph{Results} 
Our results for training on unbiased datasets are summarized in
Table~\ref{tab:results_unbiased} (full results can be found in
Appendix~\ref{appendix:extended_results}). Across all three tasks, we observe
significant improvements in both NLL and ESS when using LDR compared to the
baselines. For the Gaussian mixture task, LDR regularization allows stable
training on only \num{500} data points, while the performance without LDR
significantly degrades for this dataset size. For alanine dipeptide, LDR using
\num{1e6} samples outperforms training without LDR with \num{5e6} samples,
indicating more than \num{5} times higher data efficiency. For alanine
hexapeptide, we observe that LDR yields the largest improvements for \num{2e6}
and \num{5e6} training samples (ESS plot on the right of
Table~\ref{tab:results_unbiased}). For \num{1e6} samples, the model likely
remains too far from the target, leading to high-variance importance weights and
a noisy LD objective, making it least effective in this regime. Moreover, LDR-L1
performs better than LDR-L2 for hexapeptide, consistent with the L1 objective
being less sensitive to outliers. Appendix~\ref{sec:additional_LD} provides
additional support for the improved stability of LDR-L1 through an analysis of
gradient-norm variability during training.

\subsection{Training on Biased Datasets} \label{sec:training_biased_datasets}
Unbiased molecular dynamics simulations become prohibitively expensive when the
relevant configuration space is separated by large free-energy barriers or
characterized by slow collective modes. In such regimes, trajectories can remain
trapped in metastable regions for long times, making it infeasible to obtain
sufficient equilibrium samples from $p_X$ within reasonable computational
budgets.

By relaxing the requirement for unbiased samples, biased simulation techniques
accelerate exploration by modifying the sampling dynamics, trading exact
sampling from $p_X$ for improved state-space coverage
\cite{abramsEnhancedSamplingMolecular2013}. The resulting datasets are drawn
from a biased distribution $p_X^{\text{biased}}$ but are often orders of
magnitude cheaper to obtain than unbiased equilibrium data.

We train Boltzmann generators on biased simulation data using a two-stage
procedure. First, we perform standard data-based training, using the forward KL
on a biased dataset $\mathcal D_n^{\text{biased}} = \{x_i\}_{i=1}^n,\, x_i \sim
p_X^{\text{biased}}$. This yields an initial proposal distribution
$q_X^{\theta_1}$ that approximates the biased distribution. We use the same
pre-trained model $q_X^{\theta_1}$ for all methods.

Second, we generate samples $x_j \sim q_X^{\theta_1}$ and compute importance weights
$w(x_j)=\tilde p_X(x_j)/q_X^{\theta_1}(x_j)$. From this, we construct an approximate
equilibrium dataset $\mathcal D_m^{\text{IS}}$ via categorical resampling, i.e.,
by drawing samples with probabilities proportional to $w(x_j)$.

We then refine our initial biased proposal $q_X^{\theta_1}$ by training on
$\mathcal D_m^{\text{IS}}$. Here, we compare standard forward KL training on
$\mathcal D_m^{\text{IS}}$ with its LDR-L1-regularized and LDR-L2-regularized
variants. This form of self-refinement has been previously used in other
contexts
\cite{schopmansTemperatureAnnealedBoltzmannGenerators2025,tanAmortizedSamplingTransferable2025b}.
Since LDR can be evaluated over arbitrary reference distributions, we can use
not only $\mathcal D_m^{\text{IS}}$ for LDR, but also a mixture of $\mathcal
D_m^{\text{IS}}$ and the biased dataset $\mathcal D_n^{\text{biased}}$ (we use a
$50:50$ mixture). This leverages additional training signal from the biased
distribution, which is typically difficult to incorporate into Boltzmann
generator training.

We note that the importance-sampling step requires sufficient overlap between
$q_X^{\theta_1}$ and $p_X$; modes absent from the proposal cannot be recovered
by resampling. If a (small) unbiased dataset is available, one can instead skip
the importance sampling step and add arbitrary biased data to the reference
distribution of the LDR objective, while using the unbiased data for the
data-based objective. This allows incorporating biased distributions that do not
cover all modes. However, we do not explore this option here, and focus on the
importance sampling approach.

\paragraph{Dataset} To test our methodology for training on biased datasets, we
use short trajectories starting in the main modes of the energy landscape of
alanine dipeptide. The short length of each trajectory leads to high
correlations across trajectories, ultimately resulting in a heavily biased
dataset of \num{1e6} samples. Details can be found in
Appendix~\ref{appendix:datasets}.

In total, generating this biased dataset required only \num{1.1e6}
target evaluations and can be performed in less than \SI{200}{\second} on a regular
laptop. In contrast, typical MD simulations require
$>\num{1e8}$ steps for high-quality unbiased datasets for alanine dipeptide (our
ground truth MD simulation took $\sim 2$ days to simulate).

\input{tables/results_cmt.tex}

\paragraph{Results} 
Results for training on the biased dataset of alanine dipeptide are summarized
in Table~\ref{tab:results_biased} (full results can be found in
Appendix~\ref{appendix:extended_results}). Using only \num{1e5} importance
samples, LDR applied on $\mathcal D_m^{\text{IS}}$ substantially improves
performance compared to the non-regularized baseline. Applying regularization on
both $\mathcal D_m^{\text{IS}}$ and the original biased dataset $\mathcal
D_n^{\text{biased}}$ further improves performance, leading to a substantial gap.
LDR effectively leverages information from the biased data distribution, leading
to substantially improved refinement compared to standard
importance-sampling-based self-refinement. 

In total, LDR allows training of a well-performing Boltzmann generator using
only \num{1e5} + \num{1.1e6} (to construct the biased dataset) target
evaluations. To achieve similar performance, non-regularized training requires
more than \num{10} times more IS samples.

\subsection{Variational Training without Target Data} \label{sec:variational_training}
Next, we show that regularizing data-based training also improves efficiency in
the purely variational setting, where no target data (unbiased or biased) are
available and training uses only target energy evaluations. Several recent
methods train Boltzmann generators by annealing the model distribution toward
the target through a sequence of intermediate distributions $q_i$
\cite{schopmansTemperatureAnnealedBoltzmannGenerators2025,klitzingLearningBoltzmannGenerators2025a}.
To learn each intermediate distribution $q_i$, a buffer is built by reweighting
model samples to the current intermediate distribution. Standard data-based
training is then performed on each buffer until the next intermediate
distribution is constructed, so we can readily apply LDR, using the buffer as
the reference distribution.

Temperature-Annealed Boltzmann Generator (TA-BG)
\cite{schopmansTemperatureAnnealedBoltzmannGenerators2025} pre-trains with the
reverse Kullback-Leibler (KL) divergence at an elevated temperature and
subsequently anneals the model distribution using a number of intermediate
temperatures $q_{i} \propto \tilde p_X^{\alpha_i} $. Constrained Mass Transport
(CMT) \cite{klitzingLearningBoltzmannGenerators2025a} extends this idea and uses
a generalized annealing path of the form $q_{i} \propto q_0^{1-\lambda_i} \left(
\tilde p_X^{\alpha_i}\right)^{\lambda_i}$. The parameter schedule
$(\lambda_i)_i$ is chosen adaptively based on a trust-region constraint that
limits the KL divergence between subsequent intermediate distributions. This
guarantees sufficient overlap between successive intermediates and improves
performance compared to TA-BG. We provide a more detailed introduction to the
annealing-based variational framework used in TA-BG and CMT in
Appendix~\ref{appendix:variational_training_annealing}. As an additional
baseline, we include Flow Annealed Importance Sampling Bootstrap (FAB)
\cite{midgleyFlowAnnealedImportance2022a}, which was the first to successfully
learn alanine dipeptide variationally without mode collapse.

LDR can be applied to FAB, TA-BG, and CMT, as all use buffered off-policy
training. Since CMT represents the current state of the art, we apply LDR only
to CMT. For FAB and TA-BG, we run baseline experiments without LDR.

CMT provides a particularly suitable environment for applying LDR: by limiting
the KL-divergence between intermediate distributions, it enforces sufficient
overlap, which stabilizes the resulting log importance weights. This yields a
lower-variance learning signal for log-dispersion than, e.g., in our experiments
directly on unbiased datasets. We therefore view the combination of CMT and LDR as particularly promising for
scaling equilibrium sampling to larger systems, although this remains an active
area of research
\cite{tanScalableEquilibriumSampling2025a,tanAmortizedSamplingTransferable2025b}
and is beyond the scope of this work.

\paragraph{Results}
Table~\ref{tab:results_cmt} summarizes our results of applying LDR to
variational training with CMT (full results can be found in
Appendix~\ref{appendix:extended_results}). LDR substantially improves the
performance of CMT for all tested settings of total target evaluations. The gap
becomes especially pronounced when reducing the number of target evaluations. On
alanine dipeptide, CMT + LDR requires only \num{1e7} target evaluations to
achieve comparable performance to non-regularized CMT with \num{1e8} target
evaluations (see Table~\ref{tab:results_CMT_all}). This indicates an
approximately \num{10} times higher efficiency in terms of target evaluations.

CMT on alanine hexapeptide was originally reported with \num{4e8} target
evaluations. In this setting, LDR substantially improves final performance,
increasing ESS by more than \num{20} percentage points. When lowering the number
of target evaluations to \num{1e8}, vanilla CMT shows mode collapse (see
Figure~\ref{fig:all_ramachandran_tica_overview} in the appendix) and reaches an
ESS of only \SI{5.92}{\percent}. In contrast, LDR makes it possible to still
achieve stable training without mode collapse in this setting, achieving an ESS
of over \SI{30}{\percent}. We further note that FAB and TA-BG perform
considerably worse, even though they use $\geq 4$ times the number of target
evaluations.

Overall, LDR allowed us to significantly reduce the number of target evaluations
used in CMT while largely preserving final performance. We refer to
Appendix~\ref{sec:appendix:pushing_the_limits} for an ablation where we reduce
the number of target evaluations even further than what we report in our main
experiments.

\subsection{Training Using Cartesian Coordinates}
\label{sec:cart_coords}

To show that the gains from LDR are not tied to internal coordinate
representations, we additionally train Boltzmann generators directly on
Cartesian coordinates. To match the data regime of related works, we use
\num{1e5} training samples \cite{tanScalableEquilibriumSampling2025a}. We use an
autoregressive normalizing flow based on a transformer with causal masking
\cite{zhaiNormalizingFlowsAre2025a}, adapted to Boltzmann generators by
\citeauthor{tanScalableEquilibriumSampling2025a}
\yrcite{tanScalableEquilibriumSampling2025a}. Following
\citet{tanScalableEquilibriumSampling2025a}, we incorporate the symmetries via
data augmentation (random rotations and center-of-mass noise after centering),
and thus train the flow in an augmented space. When estimating unbiased
quantities via importance sampling, this requires a correction to map back to
the target space; we propose an improved version of this correction compared to
the one introduced by
\citeauthor{tanScalableEquilibriumSampling2025a}
\yrcite{tanScalableEquilibriumSampling2025a} (see
Appendix~\ref{appendix:theory:com_augm} for details).

We note that this discrete normalizing flow operating on Cartesian coordinates
is substantially faster than continuous normalizing flow or diffusion-based
counterparts used in related works. In Appendix~\ref{appendix:architecture}, we
provide a detailed inference-speed comparison demonstrating that TarFlow
achieves nearly a $4000\times$ higher sampling throughput than a representative
continuous normalizing flow baseline.

\input{tables/results_tarflow.tex}

\paragraph{Results} Table~\ref{tab:results_tarflow} summarizes our results on
Cartesian coordinates. First, we observe that our improved augmentation
correction improves results from the original formulation
\cite{tanScalableEquilibriumSampling2025a}. Furthermore, LDR outperforms
non-regularized forward KL by a substantial margin. Both LDR-L2 and LDR-L1
perform similarly well. We provide a detailed discussion of these results in
Appendix~\ref{appendix:extended_discussion_cart}. There, we also provide a
comparison with the path-gradient version of the forward KL as an additional
baseline \cite{vaitlFastUnifiedPath2023}. Our results show that LDR provides a
strong additional training signal from energy labels also in the
Cartesian-coordinate setting. While we focused on a controlled single-system
benchmark, future work can explore training with LDR in the transferable setting
\cite{tanAmortizedSamplingTransferable2025b}.

\section{Discussion}
\label{sec:discussion}
While we focus on normalizing flows due to their accuracy and efficient
importance sampling, LDR applies to any likelihood-based model class. It can
also be applied to diffusion-based Boltzmann generators by reformulating
Equation~\ref{eq:LDR} over joint diffusion paths rather than terminal marginals,
which upper-bounds the terminal objective
\cite{sanokowskiRethinkingTrainingDiffusion2025}. While variational on-policy
log-variance training for diffusion models is well-studied
\cite{richterImprovedSamplingLearned2023,senderaImprovedOffpolicyTraining2024},
using it as off-policy regularization for training diffusion models has not been
explored. However, we consider this outside the scope of the current manuscript.

Finally, we emphasize the strong dependence of final performance on dataset size
observed across all experiments. The largest unbiased datasets used in this work
(\num{5e6} samples) are considerably larger than those employed in related
studies. We consistently observe substantial performance improvements when
increasing dataset size beyond the regimes explored in prior work, both with and
without LDR. This suggests that dataset size is a critical factor for Boltzmann
generator training. Importantly, while LDR does not remove this dependence, it
mitigates it by increasing data efficiency: models trained with LDR experience
significantly improved metrics at smaller dataset sizes.

In the main experiments, we evaluated two variants of LDR, LDR-L1 and LDR-L2.
Across all benchmarks, both variants yield comparable and substantial
performance gains, indicating that the benefits of LDR are robust to the
specific choice of dispersion measure. We observe a slight advantage of LDR-L1
in the unbiased training of alanine hexapeptide, which we attribute to the
heavy-tailed distribution of importance weights in this high-dimensional energy
landscape. By penalizing absolute rather than squared deviations, LDR-L1 is less
sensitive to outliers and therefore provides more stable optimization in this
regime. Appendix~\ref{sec:additional_LD} contains additional experiments on
other members of the log-dispersion family; we generally find that dispersion
penalties with weaker tail sensitivity perform favorably. Overall, however, the
performance difference between LDR-L1 and LDR-L2 remains small, suggesting that
either variant is a reasonable default choice in practice.

\raggedbottom
\paragraph{Limitations} 
A limitation of log-dispersion regularization is the introduction of an
additional hyperparameter in the form of the relative loss weighting between LDR and
the data-based objective. However, we observe only moderate sensitivity to this
choice: as shown in Appendix~\ref{appendix:loss_weight_sensitivity},
performance remains stable over a range of loss weights.

A second limitation is that applying LDR with reference distributions far from
the target may cause unstable training. Although this did not occur in our
experiments, understanding LDR's behavior under increasingly biased references
is an important direction for future work.

\section{Conclusion}
\label{sec:conclusion}

We introduced \textbf{off-policy log-dispersion regularization (LDR)}, a simple
yet powerful framework for incorporating target energy information into the
training of Boltzmann generators. By generalizing the log-variance objective and
using it as an off-policy shape regularizer, LDR provides an additional training
signal without requiring extra on-policy samples or additional target energy evaluations.

Across a wide range of settings, including training on unbiased equilibrium
data, biased simulation datasets, and purely variational training without access
to target samples, LDR consistently improves both final performance and data
efficiency. In several benchmarks, these gains translate into order-of-magnitude
reductions in the number of samples or target energy evaluations required to
reach a given performance level. Importantly, the framework is broadly
applicable and easy to integrate into existing pipelines.

\section*{Reproducibility Statement}
The source code to reproduce our experiments can be found on GitHub:
\url{https://github.com/aimat-lab/Log-Dispersion-Regularization}. This
repository also contains information on how to obtain the ground truth datasets
necessary for training and evaluation.

\section*{Acknowledgments}
H.S. acknowledges financial support by the German Research Foundation (DFG)
through the Research Training Group 2450 “Tailored Scale-Bridging Approaches to
Computational Nanoscience”. P.F. acknowledges funding from the Klaus Tschira
Stiftung gGmbH (SIMPLAIX project) and the pilot program Core-Informatics of the
Helmholtz Association (KiKIT project). Parts of this work were performed on the
HoreKa supercomputer funded by the Ministry of Science, Research and the Arts
Baden-Württemberg and by the Federal Ministry of Education and Research. 
The authors gratefully acknowledge the Gauss Centre
for Supercomputing e.V. (www.gauss-centre.eu) for funding this project by
providing computing time through the John von Neumann Institute for Computing
(NIC) on the GCS Supercomputer JUWELS at Jülich Supercomputing Centre (JSC).

\section*{Impact Statement}
This paper presents work whose goal is to advance the field of machine learning.
There are many potential societal consequences of our work, none of which we
feel must be specifically highlighted here.

\clearpage
\flushbottom

\bibliography{main}
\bibliographystyle{icml2026}

\newpage
\appendix
\onecolumn

\clearpage 

\section{Theory}
\label{appendix:theory}

We consider the log-dispersion objective
\[
\mathcal L_{\mathrm{LD}}^\theta(r_X;\rho)
=
\mathbb E_{r_X}\!\left[\rho\!\left(f^\theta(x)-c^\theta\right)\right],
\qquad
c^\theta=\mathbb E_{r_X}\!\left[f^\theta(x)\right],
\]
where the corresponding expectations are assumed to be well defined and
$\rho$ satisfies
\[
\rho(u) \ge 0,
\qquad
\rho(u)=0 \Longleftrightarrow u=0.
\]

\subsection{Optimum of Training with Log-Dispersion Objectives}
\begin{proposition}[Log-dispersion regularization as a divergence]
Let $r_X$ be a fixed reference distribution with full support on $X$, i.e.,
$r_X(x)>0$ for all $x\in X$. Let $\tilde p_X(x)>0$ be an unnormalized target
density and let $q_X^\theta$ be a normalized model density with
$q_X^\theta(x)>0$ for all $x$. Then $\mathcal
L_{\mathrm{LD}}^\theta(r_X;\rho)\ge 0$, and
\[
\mathcal L_{\mathrm{LD}}^\theta(r_X;\rho)=0
\quad\Longleftrightarrow\quad
q_X^\theta(x)=\frac{\tilde p_X(x)}{\mathcal Z}=p_X(x).
\]
In particular, $\mathcal L_{\mathrm{LD}}^\theta(r_X;\rho)$ is a divergence whose
unique minimum is attained at the target distribution.
\end{proposition}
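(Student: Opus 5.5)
Nonnegativity is immediate: the integrand $\rho(f^\theta(x)-c^\theta)$ is pointwise nonnegative because $\rho\ge 0$, so its expectation under $r_X$ is nonnegative. The content of the proposition is therefore the characterization of the zero set. I will prove the two directions separately.

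For the direction ``$\Leftarrow$'', suppose $q_X^\theta=\tilde p_X/\mathcal Z$. Then
\[
f^\theta(x)=-\log q_X^\theta(x)+\log\tilde p_X(x)=\log\mathcal Z
\]
for every $x$, so $f^\theta$ is constant. Hence $c^\theta=\mathbb E_{r_X}[f^\theta]=\log\mathcal Z$, and $f^\theta(x)-c^\theta=0$ identically. Since $\rho(0)=0$, the objective vanishes.

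For the direction ``$\Rightarrow$'', suppose $\mathcal L_{\mathrm{LD}}^\theta=0$. A nonnegative function with zero integral vanishes $r_X$-almost everywhere, so $\rho(f^\theta(x)-c^\theta)=0$ for $r_X$-a.e.\ $x$. Because $\rho(u)=0$ only at $u=0$, this gives $f^\theta(x)=c^\theta$ $r_X$-a.e. Since $r_X(x)>0$ everywhere, $r_X$-null sets are Lebesgue-null, and therefore
\[
q_X^\theta(x)=e^{-c^\theta}\tilde p_X(x)\qquad\text{for Lebesgue-a.e.\ } x.
\]
This is the one place where full support enters. Integrating over $X$ and using that $q_X^\theta$ is normalized gives $1=e^{-c^\theta}\mathcal Z$, so $c^\theta=\log\mathcal Z$ and $q_X^\theta=p_X$ almost everywhere. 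Equality everywhere follows by the usual convention that densities are identified up to null sets, or directly if both densities are continuous.

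Finally, a nonnegative functional of $(q_X^\theta,p_X)$ that vanishes exactly when the two distributions agree is a divergence, with unique minimizer $p_X$.

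The main obstacle is the almost-everywhere versus everywhere subtlety in the forward direction. It is handled by equivalence of $r_X$ and Lebesgue measure under the full-support assumption, together with identification of densities up to null sets. The normalization of $q_X^\theta$ is essential in that step: without it, the constant $c^\theta$ stays free, which is exactly the failure mode the paper describes for references lacking full support.
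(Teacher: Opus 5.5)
Your proof is correct and follows the same route as the paper's. Nonnegativity comes from $\rho\ge 0$. The forward direction uses $f^\theta=c^\theta$ on the support of $r_X$, then normalization to force $c^\theta=\log\mathcal Z$, and the converse uses $f^\theta\equiv\log\mathcal Z$. You handle the almost-everywhere issue a bit more carefully than the paper, which simply asserts that the equality holds everywhere on $X$: strict positivity of $r_X$ makes $r_X$-null sets Lebesgue-null.
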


\begin{proof}
Nonnegativity is immediate since $\rho(u)\ge 0$.

If $\mathcal L_{\mathrm{LD}}^\theta(r_X;\rho)=0$, then
$\rho(f^\theta(x)-c^\theta)=0$, which implies
$f^\theta(x)=c^\theta$. Because $r_X$ has full support, this equality
holds everywhere on $X$. Thus,
\[
\log\frac{\tilde p_X(x)}{q_X^\theta(x)}=c^\theta
\quad\Longleftrightarrow\quad
q_X^\theta(x)=e^{-c^\theta}\tilde p_X(x).
\]
Normalization of $q_X^\theta$ yields
\[
1=\int q_X^\theta(x)\,dx
=e^{-c^\theta}\int \tilde p_X(x)\,dx
=e^{-c^\theta}\mathcal Z,
\]
and therefore $q_X^\theta(x)=\tilde p_X(x)/\mathcal Z=p_X(x)$.

Conversely, if $q_X^\theta=p_X$, then $f^\theta(x)=\log\mathcal Z$ is constant, and
hence $\mathcal L_{\mathrm{LD}}^\theta(r_X;\rho)=0$.
\end{proof}

\subsection{Log-Dispersion Using Reference Distributions without Full Support}

The divergence property of the log-dispersion objective established above relies
critically on the assumption that the reference distribution $r_X$ has full
support on $X$. In this work, however, we want to use fixed datasets,
corresponding to a reference distribution whose support is restricted to a
subset of configuration space. In this setting, the divergence property no
longer holds.

\begin{proposition}[LD without full support is not a divergence]
\label{prop:ldr_nofullsupport}
Let $r_X$ be a reference distribution whose support $\operatorname{supp}(r_X)
\subsetneq X$ is a strict subset of the configuration space. Then
$\mathcal L_{\mathrm{LD}}^\theta(r_X;\rho)$ is not a divergence on the space of
normalized densities on $X$. In particular, there exist normalized
densities $q_X^\theta \neq p_X$ such that
\[
\mathcal L_{\mathrm{LD}}^\theta(r_X;\rho)=0.
\]
\end{proposition}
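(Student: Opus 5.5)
The plan is to construct an explicit counterexample. We take $q_X^\theta$ to agree with $p_X$ on the support of $r_X$ up to a constant factor, and to differ from $p_X$ on the complement.

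Write $S=\operatorname{supp}(r_X)$ and $S^c = X\setminus S$. To make the construction meaningful, we assume (as the statement implicitly requires) that $S^c$ carries positive mass under $p_X$, i.e.\ $P(S^c)=\int_{S^c}p_X\,\dd x>0$; otherwise $p_X$ and any modification on $S^c$ coincide almost everywhere. Since $p_X>0$, this holds whenever $S^c$ has positive Lebesgue measure. Fix any $\alpha\in(0,1)$ with $\alpha\neq P(S)$. We then define
\[
q_X^\theta(x)=
\begin{cases}
\dfrac{\alpha}{P(S)}\,p_X(x), & x\in S,\\[2mm]
\dfrac{1-\alpha}{P(S^c)}\,p_X(x), & x\in S^c,
\end{cases}
\]
where $P(S)=\int_S p_X\,\dd x\in(0,1)$. (If one prefers $P(S)=1$ is excluded automatically by $P(S^c)>0$, and $P(S)>0$ because $S$ is the support of a probability measure and $p_X>0$.) This density is strictly positive and integrates to $\alpha+(1-\alpha)=1$, so it is a normalized density on $X$. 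Because $\alpha\neq P(S)$, it differs from $p_X$ on a set of positive measure.

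The key computation is then routine. On $S$ we have $f^\theta(x)=-\log q_X^\theta(x)+\log\tilde p_X(x)=\log\mathcal Z+\log(P(S)/\alpha)$, which is a constant. Hence $c^\theta=\mathbb E_{r_X}[f^\theta]$ equals this same constant, since $r_X$ puts all its mass on $S$. It follows that $f^\theta(x)-c^\theta=0$ for $r_X$-almost every $x$, and $\rho(0)=0$ gives $\mathcal L_{\mathrm{LD}}^\theta(r_X;\rho)=0$. The values of $q_X^\theta$ on $S^c$ never enter the objective. This is exactly the contrast with the full-support proposition: there, the constancy of $f^\theta$ propagated to all of $X$ and normalization fixed the constant, whereas here normalization only constrains the total mass split $\alpha$ versus $1-\alpha$, which remains free.

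Finally, since a divergence must vanish only at $q_X^\theta=p_X$, this yields a family (indexed by $\alpha$) of distinct zeros, so $\mathcal L_{\mathrm{LD}}^\theta$ is not a divergence. The only delicate point is the measure-theoretic interpretation of ``strict subset''. If $S^c$ were Lebesgue-null, densities equal almost everywhere would be indistinguishable, so I would state explicitly that we require $S^c$ to have positive measure (which is the situation of interest, e.g.\ data on a lower-dimensional manifold, where in fact $S$ itself is null and one interprets $r_X$ as a measure concentrated there). The construction goes through verbatim in that case: positivity of $P(S)$ can be replaced by scaling $q_X^\theta$ on an arbitrary neighbourhood of the data versus the rest.
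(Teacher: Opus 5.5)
Your proposal is correct and follows the same idea as the paper's proof: the objective only sees $q_X^\theta$ on $\operatorname{supp}(r_X)$, so mass can be shifted between the support and its complement without breaking normalization. Your explicit two-piece rescaling, together with the neighbourhood variant for Lebesgue-null supports such as empirical datasets, makes concrete what the paper states informally as ``modify $q_X^\theta$ arbitrarily on $X\setminus\operatorname{supp}(r_X)$''; one minor slip is that your claim $P(S)>0$ does not follow from $S$ being a support unless $r_X$ has a density, but your final paragraph already handles the singular case correctly.
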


\begin{proof}
If $\operatorname{supp}(r_X) \subsetneq X$, then the condition
\[
\mathcal L_{\mathrm{LD}}^\theta(r_X;\rho)=0
\]
implies only that $f^\theta(x)=c^\theta$ holds $r_X$-almost surely, i.e., for
all $x \in \operatorname{supp}(r_X)$. This yields
\[
q_X^\theta(x)=e^{-c^\theta}\tilde p_X(x)
\qquad \text{for } x\in\operatorname{supp}(r_X),
\]
but places no constraint on $q_X^\theta(x)$ outside the support of $r_X$.

As a result, one can modify $q_X^\theta$ arbitrarily on $X \setminus
\operatorname{supp}(r_X)$ while maintaining normalization, without affecting the
value of $\mathcal L_{\mathrm{LD}}^\theta(r_X;\rho)$. Therefore, $\mathcal
L_{\mathrm{LD}}^\theta(r_X;\rho)=0$ does not imply $q_X^\theta=p_X$ on $X$, and
log-dispersion fails to be a divergence in this case.
Appendix~\ref{appendix:no_forward_kl} illustrates this phenomenon empirically.
\end{proof}

This situation arises naturally when $r_X$ corresponds to an empirical data
distribution, e.g., obtained from molecular dynamics simulations, which typically
cover only a low-dimensional manifold or a limited region of configuration space.
In this regime, LD alone enforces agreement between $q_X^\theta$ and $p_X$ only on
the data manifold, leaving the model unconstrained elsewhere and allowing
incorrect normalization or spurious probability mass outside the data support.
Furthermore, also for variational methods such as CMT, where off-policy training
on fixed reweighted buffers is performed, the divergence property does not hold.

\subsection{Consistency of Log-Dispersion Regularization}

We now show that combining the log-dispersion objective with a standard
data-based divergence resolves this issue and restores the correct optimum even
when the reference distribution lacks full support. We call this approach
log-dispersion regularization.

Let $\mathcal L_{\mathrm{data}}^\theta$ denote a data-based divergence whose
unique minimum is attained at the target distribution $p_X$, such as the forward
KL divergence, score matching, or flow matching, depending on the model class.
We consider the combined objective
\[
\mathcal{L}^\theta
=
\lambda_\mathrm{data}\,\mathcal L_{\mathrm{data}}^\theta
+
\lambda_\mathrm{LD}\,\mathcal L_{\mathrm{LD}}^\theta(r_X;\rho),
\qquad
\lambda_\mathrm{data},\lambda_\mathrm{LD} > 0.
\]

\begin{proposition}[Consistency of log-dispersion regularization]
\label{prop:hybrid_consistency}
Assume that $\mathcal L_{\mathrm{data}}^\theta \ge 0$ and that
\[
\mathcal L_{\mathrm{data}}^\theta = 0
\quad\Longleftrightarrow\quad
q_X^\theta = p_X.
\]
Then the combined objective $\mathcal L^\theta$ satisfies
\[
\mathcal L^\theta \ge 0,
\qquad
\mathcal L^\theta = 0
\quad\Longleftrightarrow\quad
q_X^\theta = p_X,
\]
independently of whether the reference distribution used in
$\mathcal L_{\mathrm{LD}}^\theta(r_X;\rho)$ has full support.
\end{proposition}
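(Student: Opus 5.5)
The argument splits the combined objective into its two nonnegative terms and uses that a positive-weighted sum of nonnegative quantities vanishes only if each term does. First, nonnegativity: $\mathcal L_{\mathrm{data}}^\theta\ge 0$ by assumption. $\mathcal L_{\mathrm{LD}}^\theta(r_X;\rho)\ge 0$ because it is the $r_X$-expectation of $\rho(\cdot)\ge 0$; this step does not need full support. Since $\lambda_\mathrm{data},\lambda_\mathrm{LD}>0$, it follows that $\mathcal L^\theta\ge 0$.

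For the forward direction, suppose $\mathcal L^\theta=0$. Both summands are nonnegative and carry strictly positive weights, so
\[
\lambda_\mathrm{data}\mathcal L_{\mathrm{data}}^\theta = -\lambda_\mathrm{LD}\mathcal L_{\mathrm{LD}}^\theta\le 0 ,
\]
which forces $\mathcal L_{\mathrm{data}}^\theta=0$. The assumed characterization of the data-based divergence then gives $q_X^\theta=p_X$ directly. The LD term plays no role here, which is exactly why the support of $r_X$ is irrelevant. Proposition~\ref{prop:ldr_nofullsupport} shows that, by contrast, $\mathcal L_{\mathrm{LD}}^\theta=0$ alone would not suffice.

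For the reverse direction, suppose $q_X^\theta=p_X$. Then $\mathcal L_{\mathrm{data}}^\theta=0$ by assumption. Also $f^\theta(x)=-\log p_X(x)+\log\tilde p_X(x)=\log\mathcal Z$ for every $x$, so $c^\theta=\mathbb E_{r_X}[f^\theta]=\log\mathcal Z$ for any reference $r_X$, whatever its support. Hence $f^\theta(x)-c^\theta=0$ identically, and $\rho(0)=0$ gives $\mathcal L_{\mathrm{LD}}^\theta=0$. Therefore $\mathcal L^\theta=0$.

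There is no substantial obstacle; the argument is elementary. The only points that need care are that the expectations defining $c^\theta$ and $\mathcal L_{\mathrm{LD}}^\theta$ are well defined, as the standing assumption of this appendix guarantees, and that the identity $f^\theta\equiv\log\mathcal Z$ holds pointwise, so it holds in particular $r_X$-almost surely for an arbitrary $r_X$.
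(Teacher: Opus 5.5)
Your proposal is correct and matches the paper's proof: nonnegativity from the two nonnegative, positively weighted terms; $\mathcal L^\theta=0$ forcing $\mathcal L_{\mathrm{data}}^\theta=0$ and hence $q_X^\theta=p_X$; and conversely $f^\theta\equiv\log\mathcal Z$, which makes $\mathcal L_{\mathrm{LD}}^\theta=0$ for any reference $r_X$.
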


\begin{proof}
Nonnegativity follows immediately from nonnegativity of both terms and the
assumption $\lambda_\mathrm{data},\lambda_\mathrm{LD} > 0$.

If $\mathcal L^\theta = 0$, then necessarily
$\mathcal L_{\mathrm{data}}^\theta = 0$, which implies $q_X^\theta = p_X$ by
assumption. Conversely, if $q_X^\theta = p_X$, then
$\mathcal L_{\mathrm{data}}^\theta = 0$ and
$f^\theta(x)=\log\mathcal Z$ is constant, yielding
$\mathcal L_{\mathrm{LD}}^\theta(r_X;\rho) = 0$ for any reference distribution $r_X$.
Thus $\mathcal L^\theta = 0$.
\end{proof}

This result shows that log-dispersion can be safely employed as a regularization
term when evaluated off-policy on a fixed dataset. The data-based objective
ensures global correctness and proper normalization of the model distribution,
while LDR provides an auxiliary training signal that aligns the learned density
with the target energy landscape on the data manifold by reducing dispersion in
the log importance weights. As a result, the combined objective preserves the
true target distribution as its unique optimum, while benefiting from the
additional structure encoded in the target energies.

\subsection{Gradients of Log-Dispersion Objectives}
In this subsection, we specialize to the LDR-Lp objectives obtained by choosing
$\rho(u)=|u|^p$ with $p\ge 1$. We analyze their gradients with respect to the
model parameters~$\theta$, with particular focus on the cases $p=1$ and $p> 1$.
Our goal is to understand the behavior of these gradients at the optimum
$q_X^\theta = p_X$.

Recall that
\[
f^\theta(x) = -\log q_X^\theta(x) - \frac{E(x)}{k_\mathrm{B}T},
\qquad
c^\theta = \mathbb E_{r_X}[f^\theta(x)],
\]
and
\[
\mathcal L_{\mathrm{LD}}^{\theta\,(p)}
=
\mathbb E_{r_X}\!\left[\,\big|f^\theta(x)- c^\theta\big|^p\,\right].
\]

\paragraph{Gradient of the LDR-Lp objective}
Assuming sufficient regularity to interchange gradient and expectation, the
gradient of $\mathcal L_{\mathrm{LD}}^{\theta\,(p)}$ is given by
\begin{align}
\nabla_\theta \mathcal L_{\mathrm{LD}}^{\theta\,(p)}
&=
p\,\mathbb E_{r_X}\!\left[
\big|f^\theta(x)- c^\theta\big|^{p-1}
\operatorname{sign}\!\big(f^\theta(x)- c^\theta\big)
\left(\nabla_\theta f^\theta(x) - \mathbb E_{r_X}[\nabla_\theta f^\theta(x)]\right)
\right],
\label{eq:ldr_grad_general}
\end{align}
where $\operatorname{sign}(u) = u/|u|$ for $u \neq 0$, and
$\operatorname{sign}(0)$ denotes the subdifferential $[-1,1]$.

At the optimum $q_X^\theta=p_X$, we have
\[
f^\theta(x) = \log \mathcal Z
\quad\text{for all }x,
\]
and therefore $f^\theta(x)- c^\theta = 0$ identically.
The behavior of the gradient at this point depends critically on the choice of
$p$.

\paragraph{Case $p=1$:}
For $p=1$, the LD objective reduces to
\[
\mathcal L_{\mathrm{LD}}^{\theta\,(1)}
=
\mathbb E_{r_X}\!\left[\big|f^\theta(x)- c^\theta\big|\right],
\]
with gradient
\begin{align}
\nabla_\theta \mathcal L_{\mathrm{LD}}^{\theta\,(1)}
=
\mathbb E_{r_X}\!\left[
\operatorname{sign}\!\big(f^\theta(x)- c^\theta\big)
\left(\nabla_\theta f^\theta(x)-\mathbb E_{r_X}[\nabla_\theta f^\theta(x)]\right)
\right].
\label{eq:l1_grad}
\end{align}

At the exact optimum, $f^\theta(x)- c^\theta = 0$ for all $x$. However, the
subdifferential of the absolute value at zero is the interval $[-1,1]$, so the
gradient is not uniquely defined. In the vicinity of the optimum, when
$f^\theta(x)- c^\theta$ is small but nonzero, the gradient contributions
in~\eqref{eq:l1_grad} remain of constant magnitude, independent of how close
$q_X^\theta$ is to $p_X$. This contrasts with $p > 1$, where gradient
contributions are smoothly damped as the optimum is approached (see below).
Consequently, in stochastic optimization, where the model never exactly reaches
$q_X^\theta = p_X$ and residual fluctuations persist, the L1 log-dispersion
objective can induce persistent gradient noise near the optimum.

\paragraph{Case $p> 1$:}
For $p> 1$, the gradient in~\eqref{eq:ldr_grad_general} contains the factor
$\big|f^\theta(x)- c^\theta\big|^{p-1}$. At the optimum, where
$f^\theta(x)- c^\theta = 0$ identically, this factor vanishes pointwise,
since $p-1>0$. Therefore,
\[
\nabla_\theta \mathcal L_{\mathrm{LD}}^{\theta\,(p)} = 0
\quad\text{for all }p>1.
\]

As the model distribution approaches the optimum $q_X^\theta=p_X$, the
deviations $f^\theta(x)- c^\theta$ shrink, and the factor $\lvert
f^\theta(x)- c^\theta\rvert^{p-1}$ in the gradient increasingly damps
gradient contributions. At the optimum, this damping becomes exact, and the
gradients of the LD term vanish identically. Consequently, once the model has
reached the target distribution, the log-dispersion regularizer induces no
further parameter updates.

\paragraph{Implications}
Although the $p=1$ objective can be more robust to outliers due to its linear
penalty, it may introduce persistent gradient noise near the optimum. In
contrast, objectives with $p> 1$ combine vanishing gradients at convergence
with increasing sensitivity to large deviations of $f^\theta(x)$ from its mean.
This trade-off suggests a natural distinction between the cases $p=1$ and $p>
1$ when employing log-dispersion regularization in practice.

\section{Experimental Setup}
\subsection{Architecture}
\label{appendix:architecture}

Here, we summarize the normalizing flow architectures used in this work. We
further show the approximate inference time of each architecture in
Table~\ref{SI:tab_inference_speed}. To put the inference times into perspective,
we also included a continuous normalizing flow, evaluated by exactly calculating
the trace of the Jacobian using the codebase of
\citeauthor{vaitlPathGradientsFlow2025a} \yrcite{vaitlPathGradientsFlow2025a}.

\input{tables/inference_speeds.tex}

\paragraph{GMM}
To train on the GMM target density, we use a RealNVP architecture
\cite{dinhDensityEstimationUsing2017a} with affine coupling layers. The model
consists of $15$ coupling layers, each containing a conditioner network
implemented as a fully-connected neural network with weight normalization and
batch normalization. Each conditioner network has two hidden layers of $160$
units with $\tanh$ activation functions. The coupling layers alternate their
masking pattern to ensure all dimensions are transformed. The base distribution
is a standard Gaussian $\mathcal{N}(0, I_2)$. 

\paragraph{Molecular systems in internal coordinates}
The normalizing flow architecture follows established designs used in prior
studies
\cite{midgleyFlowAnnealedImportance2022a,schopmansTemperatureAnnealedBoltzmannGenerators2025,schopmansConditionalNormalizingFlows2024}.
Molecular conformations are represented in internal coordinates consisting of
bond lengths, bond angles, and dihedral angles.

The model comprises 8 pairs of neural spline coupling layers based on monotonic
rational-quadratic splines \cite{durkanNeuralSplineFlows2019a}. Each spline
operates on the interval $[0,1]$ and uses 8 bins. Within each pair of coupling
layers, a random binary mask determines which dimensions are transformed and
which are conditioned upon in the first layer, while the negated mask is applied
in the second layer. Dihedral angle dimensions are handled using circular spline
transformations \cite{rezendeNormalizingFlowsTori2020a} to account for their
periodic topology, and a fixed random periodic shift is applied after every
coupling layer. The networks producing the spline parameters are fully connected
neural networks with hidden layer sizes $[256,256,256,256,256]$ and ReLU
nonlinearities. To encode periodicity, each dihedral angle $\psi_i$ is
represented as $(\cos \psi_i, \sin \psi_i)$ when provided as input to the
parameter networks.

The base distribution of the flow is chosen as a uniform distribution on $[0,1]$
for dihedral angles, and a truncated Gaussian on $[0,1]$ for bond lengths and
bond angles, with mean $\mu = 0.5$ and standard deviation $\sigma = 0.1$.

Following \citet{schopmansTemperatureAnnealedBoltzmannGenerators2025}, all
internal coordinates are mapped to the $[0,1]$ domain required by the spline
transformations. Dihedral angles are rescaled by division by $2\pi$. Bond
lengths and angles are shifted and scaled according to $\eta_i^\prime = (\eta_i
- \eta_{i;\text{min}}) / \sigma + 0.5$, where $\eta_{i;\text{min}}$ is taken
from a minimum-energy configuration obtained via energy minimization. The
scaling parameter $\sigma$ is set to \SI{0.07}{\nano\meter} for bond lengths and
to \num{0.5730} for bond angles.

The molecular systems considered admit two enantiomeric configurations
corresponding to L- and R-chirality, whereas naturally occurring structures
predominantly exhibit L-chirality. To restrict generated samples to the L-chiral
manifold, the output ranges of the splines corresponding to the relevant
dihedral angles are constrained as described in
\citet{schopmansTemperatureAnnealedBoltzmannGenerators2025}. In addition,
certain atoms or functional groups are formally permutation invariant in the
force-field energy, but appear in a fixed ordering in molecular dynamics data.
Analogously to the chirality constraints, the spline transformations are
restricted such that only the permutations observed in the validation data can
be generated \cite{schopmansTemperatureAnnealedBoltzmannGenerators2025}.

To implement the normalizing flow models on internal coordinate representations,
we used the \emph{bgflow} \cite{noeBgflow2024} and \emph{nflows}
\cite{conordurkanNflowsNormalizingFlows2020} libraries.

\paragraph{Molecular systems in Cartesian coordinates}
For our experiments on Cartesian coordinates, we leverage TarFlow
\cite{zhaiNormalizingFlowsAre2025a}, a block-wise autoregressive normalizing flow
based on a transformer architecture with causal masking. We closely follow the
hyperparameters introduced by
\citeauthor{tanScalableEquilibriumSampling2025a}
\yrcite{tanScalableEquilibriumSampling2025a}. We use 4 transformation blocks,
each with 4 attention layers and 256 channels. Cartesian coordinates are
transformed block-wise, treating the three Cartesian coordinates of each atom as
one block.

\subsection{Variational Training Based on Annealing Paths}
\label{appendix:variational_training_annealing}

This section provides a short overview of the annealing-based variational
sampling approaches CMT \cite{klitzingLearningBoltzmannGenerators2025a} and TA-BG
\cite{schopmansTemperatureAnnealedBoltzmannGenerators2025}.

Let $p_X:\mathbb{R}^d\to\mathbb{R}^+$ be a probability density function known up
to its normalization constant $\mathcal{Z}$, i.e.,
\begin{equation*}
    p_X(x)=\frac{\tilde p_X(x)}{\mathcal{Z}},\quad\text{with}\quad\mathcal{Z}=\int_{\mathbb{R}^d}\tilde p_X(x) \dd x.
\end{equation*}
A common strategy to generate samples $x \sim p_X(x)$ is to approximate $p_X$
with a parameterized variational model
$q_X^\theta\in\mathcal{Q}_\theta\subset\mathcal{P}(\mathbb{R}^d)$ by minimizing
the reverse KL divergence,
\begin{equation}
\label{eq:annealing_paths:rev_kl_objective}
    \min_\theta \KL(q_X^\theta\|p_X).
\end{equation}
While this approach works well for simple target distributions, more complex
targets often lead to mode collapse, where some modes of $p_X$ are poorly
represented or entirely missed by the model.

\paragraph{Annealing paths}
One strategy to mitigate this issue is the use of variational annealing paths,
which interpolate between an initial model distribution $q_0$ and the target
distribution $p_X$ through a sequence of intermediate densities $(q_i)_{i=1}^K$,
with $q_K=p_X$. Recent work on constrained variational objectives by
\citet{klitzingLearningBoltzmannGenerators2025a} (CMT) establishes a connection
between a family of constrained variational objectives and common annealing
paths. We therefore consider the geometric-tempered annealing path
\begin{equation*}
    q_i\propto q_0^{1-\lambda_i} (p_X^{1/T_i})^{\lambda_i},\quad \lambda_i\in\mathbb{R}_{\geq0},\,\,T_i\in[1,\infty),\,\,i=1\dots K,
\end{equation*}
proposed by CMT \cite{klitzingLearningBoltzmannGenerators2025a}, which
generalizes the widely used geometric annealing path
\begin{equation*}
    q_i\propto q_0^{1-\lambda_i} p_X^{\lambda_i},\quad \lambda_i\in\mathbb{R}_{\geq0},\,\,i=1\dots K,
\end{equation*}
and the temperature
annealing path 
\begin{equation*}
    q_i\propto p_X^{1/T_i},\quad T_i\in[1,\infty),\,\,i=1\dots K,
\end{equation*}
used by TA-BG \cite{schopmansTemperatureAnnealedBoltzmannGenerators2025}.

\paragraph{Annealing schedules}
TA-BG first pre-trains the model at an elevated temperature using the reverse
KL objective. Since modes are more interconnected at high temperatures, this
helps avoid mode collapse. TA-BG subsequently anneals the model distribution,
starting from the high-temperature distribution $q_0$, using a manually chosen
geometric temperature annealing schedule.

In contrast, CMT \cite{klitzingLearningBoltzmannGenerators2025a} skips the
pre-training phase and directly starts with an uninformed prior distribution
$q_0$. \citet{klitzingLearningBoltzmannGenerators2025a} then derive a
geometric-tempered annealing path and its adaptive schedule by analytically
solving the constrained variational objective
\begin{equation*}
    q_{i+1} = \argmin_{q_X\in\mathcal{P}(\mathbb{R}^d)} \KL(q_X\|p_X)\quad\mathrm{s.t.}\quad\KL(q_X\|q_i)\leq\varepsilon_\mathrm{tr},\quad \mathcal{H}(q_i)-\mathcal{H}(q_X)\leq\varepsilon_\mathrm{ent},\quad\int q_X(x)\,\dd x=1
\end{equation*}
for general probability measures $q\in\mathcal{P}(\mathbb{R}^d)$. While this
approach can adaptively choose both $(\lambda_i)_i$ (trust-region constraint)
and $(T_i)_i$ (entropy constraint), we found it simpler to select the
temperature schedule manually, as in TA-BG, while still enforcing the trust-region
constraint. Using a manual temperature schedule thus yields the objective
\begin{equation*}
    q_{i+1} = \argmin_{q_X\in\mathcal{P}(\mathbb{R}^d)} \KL(q_X\|p_X^{1/T_{i+1}})
    \quad \mathrm{s.t.} \quad \KL(q_X\|q_i) \le \varepsilon_\mathrm{tr}, \quad \int q_X(x)\,\dd x = 1.
\end{equation*}

\paragraph{Algorithm}

\begin{algorithm}[H]
\caption{Annealing algorithm}
\label{alg:annealing_abstract}
\begin{algorithmic}

\REQUIRE Initial density $q_0$, target density $\tilde p_X$, divergence $D$,
approximation family $\mathcal{Q}_\theta$

\FOR{$i \gets 1, \dots, K$}
    \item Initialize buffer $\mathcal{B}_i$ with samples from current model $q_X^\theta$
  \item Prepare for next intermediate target $q_i$ (e.g., choose $\lambda_i$ adaptively, compute importance weights, \dots)
  \FOR{$k\gets1,\dots,L$}
        \item Update $q_X^\theta$ by performing gradient descent on $D(q_i,q_X^\theta)$ using the buffer $\mathcal{B}_i$
  \ENDFOR
\ENDFOR

\item \textbf{return} $q_X^\theta \approx p_X$

\end{algorithmic}
\end{algorithm}

Given an annealing path and schedule, the original objective in
\Cref{eq:annealing_paths:rev_kl_objective} breaks down into a sequence of
simpler variational objectives
\begin{equation*}
    \min_\theta D(q_i,q_X^\theta),\quad\quad i=1\dots K,
\end{equation*}
where $D$ denotes a statistical divergence. Both CMT and TA-BG employ the
importance-weighted forward Kullback-Leibler divergence.

Both variational approaches follow a nested optimization structure, consisting
of an outer and an inner loop. In each outer iteration, a buffer of samples is
generated and then used off-policy to update the model parameters during the
inner loop. We can thus readily apply LDR in this setting. An abstract version
of the annealing algorithm is provided in \Cref{alg:annealing_abstract}.

\subsection{Target Densities}
\label{appendix:target_densities}

\paragraph{GMM target}
We use a Gaussian mixture model (GMM) in $d=2$ dimensions, closely following the
system introduced by \citeauthor{vaitlFastUnifiedPath2023}
\yrcite{vaitlFastUnifiedPath2023}. The target distribution is defined as an
equally-weighted mixture of $2^d$ Gaussian components arranged on a regular
grid. The component means are positioned at the vertices of a hypercube with
coordinates $\mu_i \in \{-1, +1\}^d$, resulting in $2^2 = 4$ modes located at
$(\pm 1, \pm 1)$. Each component is an independent isotropic Gaussian with
standard deviation $\sigma = 0.5$.

\paragraph{Molecular systems}
All our experiments on molecular systems were performed at \SI{300}{\kelvin}. An
overview of the molecular systems investigated in this work, together with the
corresponding force-field parametrizations, is provided in
Table~\ref{SI:tab_molecular_systems}. All energy evaluations used for model
training were carried out with OpenMM version 8.0.0
\cite{eastmanOpenMM8Molecular2024} on the CPU platform, employing 18 parallel
workers.

In line with previous studies
\cite{midgleyFlowAnnealedImportance2022a,schopmansTemperatureAnnealedBoltzmannGenerators2025,klitzingLearningBoltzmannGenerators2025a},
we employ a regularized energy formulation to mitigate excessively large van der
Waals contributions arising from atomic overlaps:

\begin{equation}
\label{eq:energy_reg}
E_{\text{reg}}(E)=
\begin{cases} 
E, & \text{if } E \leq E_{\text{high}}, \\
\log(E - E_{\text{high}} + 1) + E_{\text{high}}, & \text{if } E_{\text{high}} < E \leq E_{\text{max}}, \\
\log(E_\text{max} - E_{\text{high}} + 1) + E_{\text{high}}, & \text{if } E > E_{\text{max}}\,.
\end{cases}
\end{equation}

We choose $E_\text{high}=\num{1e8}$ and $E_\text{max}=\num{1e20}$, following the
values reported by \citeauthor{midgleyFlowAnnealedImportance2022a}
\yrcite{midgleyFlowAnnealedImportance2022a}.

\begin{table}[tp]
\caption{Summary of the molecular systems considered in this study, along with
the associated force-field parametrizations. To be consistent with prior work,
we consider two different force-field variants for alanine dipeptide. The first
is used for our experiments in internal coordinates, while the latter is used
for the experiments in Cartesian coordinates.}
\label{SI:tab_molecular_systems}
\begin{center}
\begin{small}
\begin{sc}
\begin{tabular}{ccccc}
\toprule
Name & No. atoms & Sequence & Force field & Constraints \\
\midrule
\multirow{3}{*}{\shortstack[c]{alanine \\ dipeptide}} & \multirow{3}{*}{22} & \multirow{3}{*}{ACE-ALA-NME} & \multirow{3}{*}{\normalfont \shortstack[c]{Amber ff96 \\ with OBC1 \\ implicit solvation}} & \multirow{3}{*}{None} \\
\\
\\
\multirow{3}{*}{\shortstack[c]{alanine \\ dipeptide}} & \multirow{3}{*}{22} & \multirow{3}{*}{ACE-ALA-NME} & \multirow{3}{*}{\normalfont \shortstack[c]{Amber ff99SB-ILDN \\ with Amber99 OBC \\ implicit solvation}} & \multirow{3}{*}{None} \\
\\
\\
\multirow{3}{*}{\shortstack[c]{alanine \\ hexapeptide}}  & \multirow{3}{*}{62} & \multirow{3}{*}{ACE-5$\cdot$ALA-NME} & \multirow{3}{*}{\normalfont \shortstack[c]{Amber ff99SB-ILDN \\ with Amber99 OBC \\ implicit solvation}} & \multirow{3}{*}{\shortstack[c]{Hydrogen \\ bond lengths}} \\
\\
\\
\bottomrule
\end{tabular}
\end{sc}
\end{small}
\end{center}
\vskip -0.1in
\end{table}

\input{tables/params_architecture_systems}

\subsection{Datasets}
\label{appendix:datasets}

\paragraph{GMM} For the GMM experiments, we used a test set containing
\num{10000} samples from the target distribution.

\paragraph{Molecular systems}
We performed extensive molecular dynamics simulations to obtain high-quality
ground truth datasets for both training and evaluation. Our simulation protocol
is similar to that reported by
\citeauthor{schopmansTemperatureAnnealedBoltzmannGenerators2025}
\yrcite{schopmansTemperatureAnnealedBoltzmannGenerators2025} and
\citeauthor{klitzingLearningBoltzmannGenerators2025a}
\yrcite{klitzingLearningBoltzmannGenerators2025a}.

For all molecular systems, we performed two independent simulations: the first
was used to build a training dataset with \num{5e6} samples and a validation
dataset of \num{1e6} samples, the second to build a test dataset with \num{1e7}
samples. The validation dataset was used to optimize hyperparameters, and the
test dataset to report the final metrics.

\paragraph{Alanine dipeptide} For alanine dipeptide, we performed molecular
dynamics simulations with the OpenMM integrator \emph{LangevinMiddle} and a time
step of \SI{1}{\femto \second}. We first equilibrated for \SI{200}{\nano
\second}, followed by the production simulation of \SI{5}{\micro \second}.

\paragraph{Alanine hexapeptide} For alanine hexapeptide, we performed
replica-exchange molecular dynamics
\cite{sugitaReplicaexchangeMolecularDynamics1999a} simulations at 
temperatures $[ \SI{300.0}{\kelvin}, \SI{332.27}{\kelvin}, \SI{368.01}{\kelvin},
\SI{407.60}{\kelvin}, \SI{451.44}{\kelvin}, \SI{500.0}{\kelvin} ]$ with the
OpenMM integrator \emph{LangevinMiddle} and a time step of \SI{1}{\femto
\second}. We first equilibrated each replica independently for \SI{200}{\nano
\second} without exchanges, then equilibrated for \SI{200}{\nano \second} with
exchanges, and subsequently performed the production simulation of \SI{2}{\micro
\second} per replica. To build our datasets, the data of the \SI{300.0}{\kelvin}
replica was used.

\paragraph{Biased dataset for alanine dipeptide}
To produce the biased dataset for alanine dipeptide (see
Section~\ref{sec:training_biased_datasets} of the main text), we used 4 starting
configurations located in the main minima of the energy surface of alanine
dipeptide, see Figure~\ref{fig:appendix_biased_aldp_dataset}. In each
configuration, we started 20 MD trajectories at \SI{300}{\kelvin}, each with a
length of $13750$ steps and time step \SI{1}{\femto \second} using a
\emph{LangevinMiddle} integrator. We discarded the first $1250$ steps of each
trajectory. Since we recorded every step of the trajectories, this resulted in a
biased dataset with \num{1e6} samples.

This construction uses prior knowledge of the main modes of alanine dipeptide
through the choice of starting configurations. Consequently, the total number of
target energy evaluations used in our biased experiments is not directly
comparable to the number of evaluations required for the unbiased
experiments in Section~\ref{sec:training_unbiased_datasets}; these costs should
therefore be interpreted separately.

\begin{figure*}[htbp]
  \vskip 0.2in
  \begin{center}
  \centerline{\includegraphics{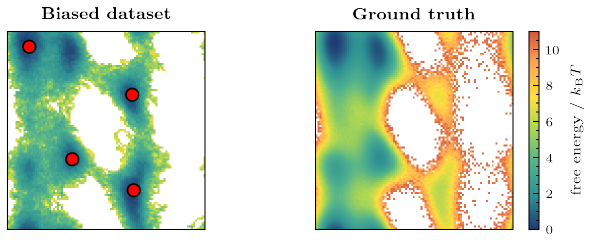}}
  \caption{Ramachandran plot of the biased training dataset for alanine
  dipeptide. The four starting configurations of the trajectories are labeled in
  red. Due to the short length of the trajectories, the high-energy metastable
  states on the right side are oversampled compared to the ground truth.}
  \label{fig:appendix_biased_aldp_dataset}
  \end{center}
\end{figure*}

\subsection{Metrics}
\label{appendix:section:metrics}

As discussed in the main part of our manuscript, we use the negative
log-likelihood (NLL) and ESS as the main metrics of our experiments. We also
introduce several additional metrics below. Metrics for the molecular
experiments in internal coordinates were estimated using \num{1e7} samples from
the model and ground truth (when needed). For the GMM experiments, we evaluated
the NLL on the separate test set of \num{10000} samples and used \num{100000}
model samples to estimate the reverse ESS. For the experiments in Cartesian
coordinates with TarFlow, we used \num{1e6} samples for evaluation due to the
reduced inference speed.

\paragraph{NLL} The NLL is defined as
\begin{align}
\mathrm{NLL}=-\mathbb{E}_{x \sim p_X(x)}\left[\log q_X^\theta(x)\right] \, .
\end{align}

For the experiments on Cartesian coordinates with data augmentation, we evaluate
the NLL on the augmented distribution $p_\text{aug}(x)$, as this is ultimately
the one learned by the model (see Appendix~\ref{appendix:theory:com_augm} for
details).

\paragraph{ESS} 
The ESS can be calculated using the following equation
\cite{martinoEffectiveSampleSize2017,midgleySE3EquivariantAugmented2023}:

\begin{align}
    &\text{ESS}=\frac{n_{\mathrm{e}, \mathrm{rv}}}{N}=\frac{1}{N \sum_{i=1}^N \bar{w}\left(x_i\right)^2} \label{eq:reverse_ESS} \\
    &\text{with} \quad x_i \sim q_X^\theta\left(x_i\right)\,\text{,} \quad \bar{w}(x_i) = \frac{w(x_i)}{\sum_{j=1}^N w(x_j)}\,\text{,} \quad w(x)=\frac{\tilde{p}_X(x)}{q_X^\theta(x)} \notag
\end{align}

For the molecular experiments in internal coordinates, the top
\SI{0.01}{\percent} of importance weights were truncated to their minimum value
within this subset when calculating the ESS, following previous work
\cite{midgleyFlowAnnealedImportance2022a,schopmansTemperatureAnnealedBoltzmannGenerators2025,klitzingLearningBoltzmannGenerators2025a}.
This clipping is performed to remove outliers due to model instabilities from
the importance weights. The effective sample size was computed using the
regularized energy function (Equation~\ref{eq:energy_reg}). For the GMM
experiments, we applied the same \SI{0.01}{\percent} clipping strategy. For the
Cartesian-coordinate experiments, following
\citet{tanScalableEquilibriumSampling2025a}, we instead discarded the samples
corresponding to the top \SI{0.2}{\percent} of importance weights entirely
before calculating the reverse ESS.

\paragraph{Ramachandran metrics}
Ramachandran plots visualize the two-dimensional log-density of the joint
distribution of backbone dihedral angle pairs $(\phi,\psi)$ in a peptide. They
capture the dominant conformational degrees of freedom of molecular systems and
are particularly sensitive to mode collapse and missing high-energy regions.

To quantitatively assess deviations between ground-truth and model-generated
Ramachandran plots, we follow previous work
\cite{midgleyFlowAnnealedImportance2022a,schopmansTemperatureAnnealedBoltzmannGenerators2025,klitzingLearningBoltzmannGenerators2025a}
and compute the forward Kullback-Leibler divergence between the discretized
ground truth Ramachandran distribution and the model distribution, which we call
\textbf{RAM KL}. Both distributions were estimated using $100 \times 100$ bins
over the full dihedral angle range. Since alanine hexapeptide has 5 pairs of
backbone dihedral angles, we average over the 5 corresponding \textbf{RAM KL}
metrics.

In addition, we report a reweighted variant of this metric (\textbf{RAM KL w.\
RW}), where model samples are first reweighted to the target distribution using
importance weights before constructing the Ramachandran histogram. To improve
the numerical stability of the reweighted estimate and to suppress the influence
of rare outliers caused by model instabilities, the same clipping procedure as
used for the ESS computation is applied: the top \SI{0.01}{\percent} of
importance weights are clipped to the minimum value within this subset. For the
Cartesian-coordinate experiments, we instead clipped the top \SI{0.2}{\percent}
of importance weights to the minimum value within this subset.

\paragraph{TICA metrics}
Time-lagged independent component analysis (TICA) provides a low-dimensional
representation of the slow collective degrees of freedom of molecular systems.
Distributions in TICA space are therefore sensitive to deficiencies in the
learned slow dynamics and to mode collapse in kinetically relevant regions.

To quantitatively assess deviations between ground-truth and model-generated
distributions in TICA space, we compute the forward Kullback--Leibler divergence
between the discretized ground truth TICA distribution and the model
distribution, which we call \textbf{TICA KL}. Both distributions were estimated
using $100 \times 100$ bins over the TICA ranges determined from the
ground-truth samples and using the first two TICA components.

In addition, we report a reweighted variant of this metric (\textbf{TICA KL w.\
RW}), where model samples are first reweighted to the target distribution using
importance weights before constructing the TICA histogram, using the same
importance weight clipping as used for \textbf{RAM KL w. RW}.

\paragraph{Metrics based on energy distribution}

In line with related work \cite{tanScalableEquilibriumSampling2025a}, we further
measure the discrepancy between generated and reference target energy
distributions by computing the 2-Wasserstein distance between their
one-dimensional distributions over potential energy values (estimated from
samples). Lower $\mathcal{E}$-$\mathcal{W}_2$ indicates a closer match of the
generated energy histogram to the reference. Since this metric is very sensitive
to outliers, in line with related work, we only report this metric after
categorical resampling of the flow sample distribution according to the
importance weights. For our experiments in internal coordinates, we use
\num{1e7} flow samples and resample to \num{1e7} samples. For our experiments in
Cartesian coordinates, we use \num{1e6} flow samples and resample to \num{1e6}
samples. We note that, in contrast to Wasserstein distances in $d>1$, they can
be efficiently calculated even for very large sample sizes in 1D.

We emphasize that this histogram energy metric compares only a one-dimensional
marginal distribution, and can therefore be less informative than metrics such
as ESS, which more directly reflects the quality of the reweighting and the
match to the target distribution. Moreover, across all our experiments the
reweighted energy histograms are already very close to the target; hence,
$\mathcal{E}$-$\mathcal{W}_2$ mainly captures small residual fluctuations, and
we caution against placing too much weight on small differences in this metric.

\paragraph{Further metrics used in related work}

Several related works report the torus Wasserstein distance
($\mathbb{T}\text{-}\mathcal{W}_2$) on backbone dihedral angles and the
Wasserstein distance in the TICA projection space, $\text{TICA-}\mathcal{W}_2$
\cite{tanScalableEquilibriumSampling2025a,tanAmortizedSamplingTransferable2025b,klitzingLearningBoltzmannGenerators2025a,rehmanFALCONFewstepAccurate2025b,rehmanEfficientRegressionBasedTraining2025}.

This metric is typically estimated using $10^4$ samples (some more recent works
use up to $2.5\times 10^5$ samples), where the sample-based dihedral angle
distribution can still be a very coarse proxy for the underlying target
distribution; for example, even in the two-dimensional Ramachandran marginals of
alanine dipeptide, the qualitative appearance changes drastically as the number
of reference samples increases from $10^4$ to $10^7$
(Figure~\ref{fig:appendix_aldp_10k_vs_1e7}). Especially to properly resolve the
high-energy metastable region on the right side of the Ramachandran, which is
crucial to be sensitive with respect to mode collapse, a large number of samples
is necessary. Increasing the number of samples to match the $10^7$ samples we
use for the metrics \textbf{RAM KL} and \textbf{TICA KL} would make
Wasserstein-based evaluation prohibitively expensive, particularly when
considering higher-dimensional torus products for longer peptides such as
alanine hexapeptide.

For these reasons, we focus on our KL-based metrics, which can be estimated
reliably at very large sample sizes and which we found to yield more stable and
reliable comparisons.

\begin{figure*}[htbp]
  \vskip 0.2in
  \begin{center}
  \centerline{\includegraphics{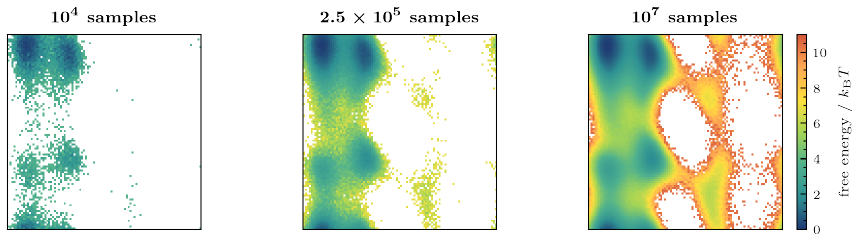}} 
  \caption{Ramachandran plots for alanine dipeptide, using randomly chosen
  \num{1e4} samples (left), \num{2.5e5} (middle), and \num{1e7} samples (right)
  from the ground truth dataset.}
  \label{fig:appendix_aldp_10k_vs_1e7}
  \end{center}
\end{figure*}

\subsection{Hyperparameters}
\paragraph{General} 
Unless stated otherwise below, experiments were performed using the Adam
optimizer~\cite{kingmaAdamMethodStochastic2017} with weight decay \num{1e-5},
and were implemented in \emph{PyTorch} \cite{paszkePyTorchImperativeStyle2019}.
All experiments included a cosine annealing learning rate scheduler with a
single cycle.

Initial learning rates were tuned separately for each experiment, using a
logarithmically spaced grid search. We first evaluated a coarse grid spanning
several orders of magnitude, and subsequently refined the search around the
best-performing region using a finer grid of the form $\{\ldots, 10^{-4},
3.2\times10^{-5}, 10^{-5}, \ldots\}$.

All molecular data-based training methods (unbiased and biased) used a batch
size of $1024$. For the unbiased GMM experiments, the batch size was always one
tenth of the training-dataset size: $50$, $100$, and $1000$ for datasets of
$500$, $1000$, and $10000$ samples, respectively. The GMM experiments did not
use weight decay.

The total number of gradient descent steps for each method can be found in
Table~\ref{SI:tab_training_times}.

\paragraph{LDR}
To report best-case performance with LDR, we tuned $\lambda_{\text{data}}$ over
the grid $\{0.1, 0.3, 0.5, 0.7, \ldots\}$ while fixing $\lambda_{\text{LD}} = 1$
for the experiments in internal coordinates.  We emphasize, however, that LDR
performs well across a range of choices of $\lambda_\text{data}$, as shown in
Figures~\ref{fig:appendix_loss_weight_aldp}
and~\ref{fig:appendix_loss_weight_hexa}. For the experiments in Cartesian
coordinates, we set $\lambda_{\text{data}}=1$ and tuned $\lambda_{\text{LD}}$
over the grid $\{0.1, 0.3, 0.5, 0.7, \ldots\}$.

In the data-based LDR experiments, we additionally warmed up the effective
regularization weight at the beginning of training. This makes the early
optimization steps dominated by the data-based objective before LDR is gradually
switched on. Specifically, we multiplied the final value of
$\lambda_{\text{LD}}$ by a cosine ramp,
\begin{align}
    \lambda_{\text{LD}}(t)
    =
    \lambda_{\text{LD}}
    \begin{cases}
        \frac{1}{2}\!\left(1-\cos\!\left(\pi t / T_{\text{warm}}\right)\right),
        & t < T_{\text{warm}},\\
        1, & t \geq T_{\text{warm}}.
    \end{cases}
\end{align}
For the unbiased LDR experiments in internal coordinates, we used
$T_{\text{warm}}=\num{2e4}$ optimization steps. For the biased LDR experiments
on alanine dipeptide, we used $T_{\text{warm}}=\num{5e4}$ optimization steps.
For the unbiased LDR experiments in Cartesian coordinates, we used
$T_{\text{warm}}=\num{1e5}$ optimization steps. For CMT + LDR, we did not
schedule $\lambda_{\text{LD}}$ and instead used the final weight throughout
training.

\paragraph{Biased experiments}
For the importance sampling step in the biased experiments, we categorically
resampled the importance-weighted dataset to a new dataset of \num{1e7} samples.

\paragraph{CMT}
We used a batch size of $1000$ for alanine dipeptide and $2000$ for alanine
hexapeptide. Gradient norm clipping was applied with a threshold of $100$. At
the start of training, the learning rate was linearly warmed up over the first
$1000$ steps. The trust-region bound was set to $0.3$ for all experiments.

For alanine dipeptide, we used $200$ outer annealing steps, while $400$
annealing steps were used for alanine hexapeptide. To compute the number of
samples for the buffer in each annealing step, we uniformly distributed the
total number of target energy evaluations over the number of annealing steps.

As described in Section~\ref{appendix:variational_training_annealing}, we did
not use the entropy constraint introduced by
\citeauthor{klitzingLearningBoltzmannGenerators2025a}
\yrcite{klitzingLearningBoltzmannGenerators2025a}. Instead, we chose a manual
geometric temperature schedule that geometrically anneals the temperature from
\SI{1200}{\kelvin} to \SI{300}{\kelvin} over the first half of the total number
of gradient descent steps. With this fixed schedule, we observed CMT to achieve
comparable results as when using an entropy constraint. In this setup, CMT is
very similar to TA-BG with an additional trust-region constraint to improve
overlap of consecutive distributions.

\paragraph{TA-BG}
For our experiments with TA-BG, we started from the exact hyperparameters
specified by \citeauthor{schopmansTemperatureAnnealedBoltzmannGenerators2025}
\yrcite{schopmansTemperatureAnnealedBoltzmannGenerators2025}. We refer to
\cite{schopmansTemperatureAnnealedBoltzmannGenerators2025} for details. Reverse
KL pre-training was performed at \SI{1200}{\kelvin}, from where a geometric
annealing path to \SI{300}{\kelvin} was followed. To ensure a fair comparison,
we matched the number of gradient descent steps used in TA-BG to those used in
CMT (see Table~\ref{SI:tab_training_times}).

\paragraph{FAB}
For our experiments with FAB, we used the exact hyperparameters specified by
\citeauthor{klitzingLearningBoltzmannGenerators2025a}
\yrcite{klitzingLearningBoltzmannGenerators2025a}, without additional tuning.

\paragraph{Cartesian-coordinate experiments}
For the experiments with TarFlow, we used AdamW with
$(\beta_1,\beta_2)=(0.9,0.95)$, weight decay \num{4e-4}, and gradient norm
clipping at $1.0$. These choices were inspired by the training setup of
\citet{tanScalableEquilibriumSampling2025a}.

\subsection{Training Times}
We summarize the total training time (excluding evaluation) observed on an
NVIDIA A100 GPU (40\,GB memory) for each method in
Table~\ref{SI:tab_training_times}. We emphasize that LDR does not change the
required training time and can be added with no extra cost.

\input{tables/training_times.tex}

\section{Training on Cartesian Coordinates}
\subsection{Augmentation}
\label{appendix:theory:com_augm}

Directly enforcing translational and rotational invariance of the target
distribution within a discrete normalizing flow architecture is challenging due
to inherent architectural constraints of normalizing flows. As a result,
\citeauthor{tanScalableEquilibriumSampling2025a}
\yrcite{tanScalableEquilibriumSampling2025a} adopt an alternative strategy and
train a non-invariant flow, while accounting for these symmetries through data
augmentation as follows:

\begin{enumerate}
\item Let $X=\mathbb{R}^{N\times 3}$. For a configuration $x\in X$, write
$x_i\in\mathbb{R}^3$ for its $i$-th row, i.e., the Cartesian position of
particle $i$, and define
\[
\bar x \coloneqq \frac{1}{N}\sum_{i=1}^N x_i,
\qquad
x^\circ \coloneqq x-\mathbf{1}_N \bar x^\top,
\]
where $\mathbf{1}_N\in\mathbb{R}^N$ denotes the column vector of ones. Thus
$\mathbf{1}_N \bar x^\top\in\mathbb{R}^{N\times 3}$ repeats the center of mass
in every row.

\item Define the augmentation map
\[ s: X_0 \times \mathbb{R}^3 \to X, \qquad s(x^\circ, t) = x^\circ +
\mathbf{1}_N t^\top. \]

\item Sample an independent translation:
\[
t \sim \mathcal{N}(0,\sigma_t^2 I_3).
\]

\item Apply the augmentation (push-forward):
\[
x_{\mathrm{aug}} = s(x^\circ, t)
= x^\circ + \mathbf{1}_N t^\top
\]
\textbf{Note:} While molecular dynamics (MD) simulations, in principle, explore
all possible molecular rotations due to the rotation-invariance of $p_X$, in
practice, we additionally apply randomly sampled rotations during training,
which increases sample diversity and robustness.
\end{enumerate}

Adding noise to the center of mass is crucial. Without this noise, the data
distribution lies on a \(3N-3\)-dimensional submanifold of \(\mathbb{R}^{3N}\).
As a result, any mapping from a full-dimensional base distribution would require
a singular Jacobian, violating the invertibility assumptions of normalizing
flows.

In the following, we derive the push-forward density $p_{\text {aug }}(x)$ of
the augmentation scheme described above.

\begin{lemma}[Center-of-mass augmentation]
Let $p_{X_0}$ be a probability density on the centered subspace 
\[ X_0 \coloneqq \{z \in X : \bar z = 0\}, \]
assume $p_{X_0}$ is rotation invariant, i.e., $p_{X_0}(zR)=p_{X_0}(z)$ for all $R\in\mathrm{SO}(3)$,
and let $t \sim \mathcal{N}(0, \sigma_t^2 I_3)$.

The augmented density $p_{\mathrm{aug}}(x)$ on the full
space $X$, given by the push-forward using the map $s$, is, up to normalization:
\[ p_{\mathrm{aug}}(x) \propto p_{X_0}(x^\circ) \, \mathcal{N}(\bar x \mid 0, \sigma_t^2 I_3). \]
\end{lemma}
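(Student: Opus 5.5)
The plan is to compute the push-forward directly by a linear change of variables. The map $s: X_0\times\mathbb{R}^3\to X$, $s(x^\circ,t)=x^\circ+\mathbf{1}_N t^\top$, is a linear bijection. Its inverse is $x\mapsto(x^\circ,\bar x)$: since $x^\circ\in X_0$ has vanishing mean, the mean of $x^\circ+\mathbf{1}_N t^\top$ is exactly $t$, and subtracting it recovers $x^\circ$. The joint law of $(x^\circ,t)$ is the product $p_{X_0}(x^\circ)\,\mathcal N(t\mid 0,\sigma_t^2I_3)$ with respect to the product of Lebesgue measure on the $(3N-3)$-dimensional subspace $X_0$ and Lebesgue measure on $\mathbb{R}^3$. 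The change-of-variables formula therefore gives
\[
p_{\mathrm{aug}}(x)=p_{X_0}(x^\circ)\,\mathcal N(\bar x\mid 0,\sigma_t^2I_3)\,\lvert\det J_{s^{-1}}\rvert ,
\]
where the Jacobian is taken with respect to orthonormal coordinates on $X_0\times\mathbb{R}^3$ and on $X$.

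The step needing care is showing that this Jacobian is constant, so that it is absorbed into the ``up to normalization''. Because $s$ is linear and does not depend on the point, $\lvert\det J_s\rvert$ is a fixed number. Concretely, pick an orthonormal basis of $X_0$. The orthogonal complement of $X_0$ in $X$ is $\{\mathbf{1}_N t^\top\}$, and $\|\mathbf{1}_N t^\top\|_F=\sqrt N\,\|t\|$. So in block form $s$ is the identity on $X_0$ and scaling by $\sqrt N$ on the complement, giving $\lvert\det J_s\rvert=N^{3/2}$. I would state this constant explicitly and then drop it, which yields the claimed proportionality. To confirm that no other normalization factors arise, I would check that the proposed density integrates to $N^{-3/2}$ times $1$ under the Fubini decomposition $X=X_0\oplus X_0^\perp$.

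Finally, the random rotations and rotation invariance. Since $R$ acts on rows, $(zR)^\circ=z^\circ R$ and $\overline{zR}=\bar z R$. The Gaussian factor is invariant under $\bar x\mapsto \bar x R$, and $p_{X_0}$ is rotation invariant by assumption. Hence the density obtained above is invariant under applying a random rotation before or after the augmentation, so the rotation augmentation leaves $p_{\mathrm{aug}}$ unchanged. This step is only needed to justify that rotating samples during training does not alter the target. The main obstacle is the measure-theoretic bookkeeping of the density on the subspace $X_0$ and the constant Jacobian; the rest is routine.
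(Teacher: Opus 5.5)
Your proposal is correct and follows the same route as the paper's proof: a linear change of variables under $s$, with the constant Jacobian absorbed into the proportionality. You go beyond the paper in three ways: you are explicit about the reference measure on $X_0$, you compute $\lvert\det J_s\rvert = N^{3/2}$, and you observe that the rotation-invariance hypothesis is not used in the derivation itself. There is one small slip in your optional sanity check. The unnormalized product $p_{X_0}(x^\circ)\,\mathcal N(\bar x\mid 0,\sigma_t^2 I_3)$ integrates to $N^{3/2}$, not $N^{-3/2}$, because $\bar x = u/\sqrt N$ in orthonormal coordinates $u$ on $X_0^\perp$. That is precisely why $N^{-3/2}$ times it is a normalized density.
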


\begin{proof}
We begin with the joint density of the independent variables $x^\circ \in X_0$, $t \in \mathbb{R}^3$:
\[ p(x^\circ, t) = p_{X_0}(x^\circ) \, \mathcal{N}(t \mid 0, \sigma_t^2 I_3). \]

A change-of-variables of this joint distribution using the map $s(x^\circ,t)=x$ leads
to an augmented distribution on the full space, given by

\[ p_{\mathrm{aug}}(x) = p\big(s^{-1}(x)\big) \cdot \left| \det J \right|^{-1} \quad\text{with}\quad J=\frac{\partial s}{\partial(x^\circ, t)} \]

Substituting the joint density expression and the transformation, we obtain:
\[ p_{\mathrm{aug}}(x) \propto p_{X_0}\big(x - \mathbf{1}_N\bar{x}^\top\big) \, \mathcal{N}(\bar{x} \mid 0, \sigma_t^2 I_3) \cdot |\det J|^{-1} \]

Since $s$ is a linear transformation, its Jacobian determinant is constant. This leaves us with
\[ p_{\mathrm{aug}}(x) \propto p_{X_0}(x^\circ) \, \mathcal{N}(\bar{x} \mid 0, \sigma_t^2 I_3), \]
which completes the proof.
\end{proof}

\paragraph{Augmented loss functions}
We can now express the augmented versions of both the forward KL and the LD
objective using the non-augmented target distribution $p_{X_0}(x)$ as
\begin{equation*}
    \min_\theta\KL(p_\mathrm{aug}\|q_X^\theta)=\min_\theta -\mathbb{E}_{p_\mathrm{aug}(x)}[\log q_X^\theta(x)]=\min_\theta \underset{\substack{x^\circ \sim p_{X_0} \\ t \sim \mathcal{N}(0,\sigma_t^2I_3)}}{-\mathbb{E}}[\log q_X^\theta(x^\circ+\mathbf{1}_N t^\top)]
\end{equation*}
and
\begin{align*}
    \min_\theta\mathcal{L}_\mathrm{LD}^{\theta}
    &= \min_\theta
    \underset{p_\mathrm{aug}(x)}{\mathbb{E}}\!\left[
    \rho\!\left(f^\theta(x)
    -\underset{p_\mathrm{aug}(x)}{\mathbb{E}}[f^\theta(x)]\right)
    \right] \\
    &= \min_\theta
    \underset{\substack{x^\circ \sim p_{X_0} \\ t \sim \mathcal{N}(0,\sigma_t^2I_3)}}{\mathbb{E}}\!\left[
    \rho\!\left(f^\theta(x^\circ, t)
    -\underset{\substack{x^\circ \sim p_{X_0} \\ t \sim \mathcal{N}(0,\sigma_t^2I_3)}}{\mathbb{E}}[f^\theta(x^\circ, t)]\right)
    \right],
\end{align*}
with
\begin{equation*}
    f^\theta(x^\circ, t)=\log \tilde p_{X_0}(x^\circ)+\log\mathcal{N}(t|0,\sigma_t^2I_3)-\log q_X^\theta(x^\circ+\mathbf{1}_N t^\top).
\end{equation*}

\paragraph{Importance sampling correction}
Using the loss functions above, the normalizing flow learns to model the
augmented distribution $p_{\text{aug}}(x)$, rather than the (non-augmented)
target density $p_{X_0}$. To account for this during importance sampling with
respect to $p_{X_0}$,
\citeauthor{tanScalableEquilibriumSampling2025a}
\yrcite{tanScalableEquilibriumSampling2025a} introduced the following correction
of the proposal density:

\begin{align} 
\log q_X^{\theta\, c}(x)=\log q_X^\theta(x)+\frac{\|\bar x\|^2}{2 \sigma_t^2}-\log \left[\frac{\|\bar x\|^2}{\sqrt{2} \sigma_t^3 \Gamma\left(\frac{3}{2}\right)}\right]
\end{align}

This correction effectively removes a $\chi_3$ distribution from the proposal
density.

However, as shown above, the augmented density that the normalizing flow learns
factorizes as 

\begin{align}
p_{\mathrm{aug}}(x) \propto
p_{X_0}(x^\circ) \, \mathcal{N}(\bar{x} \mid 0, \sigma_t^2 I_3) \, .
\end{align}

Consequently, the discrepancy between the learned proposal and the target
density arises solely from the additional Gaussian factor in the augmented
coordinates. A natural correction, therefore, consists of explicitly removing this
Gaussian contribution from the proposal density, yielding

\begin{align}
\log q_X^{\theta\,c}(x)
&= \log q_X^\theta(x)
- \log \mathcal{N}(\bar x \mid 0, \sigma_t^2 I_3)
\end{align}

This recovers a proposal density consistent with importance sampling toward
$p_{X_0}$. Our results show empirically that this correction yields improved
results compared to the one proposed by
\citeauthor{tanScalableEquilibriumSampling2025a}
\yrcite{tanScalableEquilibriumSampling2025a} (Table~\ref{tab:results_tarflow}
and Table~\ref{tab:results_tarflow_all}).

\subsection{Extended Discussion of Results}
\label{appendix:extended_discussion_cart} 

As mentioned in the main text, we also used the path-gradient estimator for the
forward KL objective as a baseline in our Cartesian-coordinate experiments (see
Table~\ref{tab:results_tarflow_all}); we provide additional details here. For
this method, we relied on the ``plug-and-play'' implementation provided by
\citeauthor{vaitlFastUnifiedPath2023} \yrcite{vaitlFastUnifiedPath2023}. In our
setup, this implementation was approximately $4\times$ slower per optimizer step
than the other training objectives considered. To match the overall
computational budget, we therefore ran path-gradient forward KL for only
\num{100000} optimizer steps, whereas the other methods were trained for
\num{400000} steps.

Despite this constraint, we do observe the characteristic low-variance behavior
of path-gradient forward KL: when measured as a function of the number of
gradient updates, it tends to converge faster than the alternatives. However,
this advantage does not translate to improved performance as a function of
wall-clock time in our experiments, due to its substantially higher per-step
cost.

We note that \citet{vaitlPathGradientsFlow2025a} addressed a closely related
issue in the context of continuous normalizing flows---where the computational
gap is much more pronounced---by first pre-training with flow matching and then
fine-tuning with path gradients. However, to keep the experimental setup simple,
we did not include pre-training and fine-tuning pipelines in our setting.

Finally, we emphasize that path-gradient forward KL is not a direct competitor
to LDR. Conceptually, path-gradient forward KL regularizes training through the
inclusion of target \emph{gradients}, whereas LDR regularizes through the
inclusion of target \emph{energy values}. These mechanisms are complementary and
can be readily combined in future work.

\section{Additional Experiments and Discussion}
\subsection{Sensitivity to Loss Weights}
\label{appendix:loss_weight_sensitivity}

\begin{figure*}[htbp]
  \vskip 0.2in
  \begin{center}
  \centerline{\includegraphics{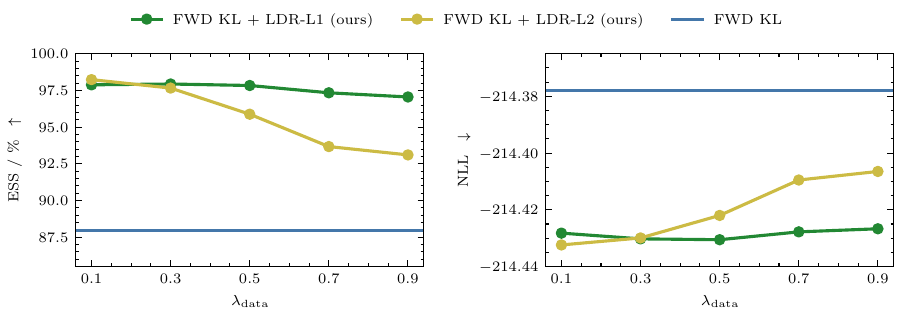}}
  \caption{Final ESS and NLL as a function of the loss weight
  $\lambda_\text{data}$ for unbiased training on alanine dipeptide using
  \num{1e6} samples.}
  \label{fig:appendix_loss_weight_aldp}
  \end{center}
\end{figure*}

\begin{figure*}[htbp]
  \vskip 0.2in
  \begin{center}
  \centerline{\includegraphics{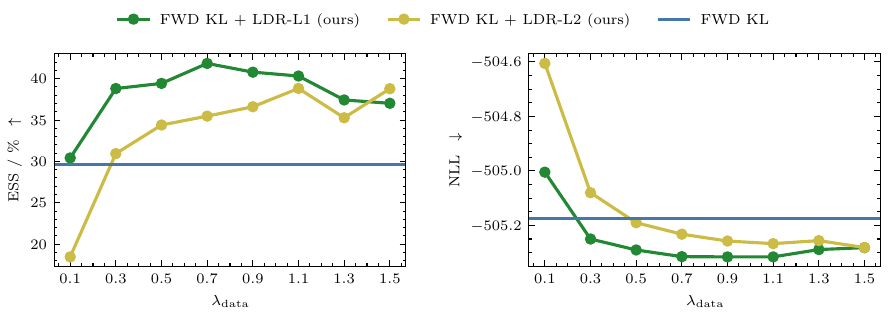}}
  \caption{Final ESS and NLL as a function of the loss weight
  $\lambda_\text{data}$ for unbiased training on alanine hexapeptide using
  \num{5e6} samples.}
  \label{fig:appendix_loss_weight_hexa}
  \end{center}
\end{figure*}

Compared to standard data-based training, our method introduces an additional
hyperparameter that scales the two loss components relative to each other. We
visualize the sensitivity of the final model performance to the choice of the
hyperparameter $\lambda_\text{data}$ in
Figures~\ref{fig:appendix_loss_weight_aldp}
and~\ref{fig:appendix_loss_weight_hexa}.
For each chosen $\lambda_\text{data}$, we tune the learning rate separately to
remove confounding effects caused by non-optimal settings. As one can see, both
LDR-L1 and LDR-L2 outperform the baseline without regularization across a large
range of chosen $\lambda_\text{data}$. Furthermore, LDR-L1 appears somewhat more
stable with respect to $\lambda_\text{data}$ compared to LDR-L2.

\subsection{Training without a Data-Based Objective}
\label{appendix:no_forward_kl}

As outlined in Appendix~\ref{appendix:theory}, LDR needs to be combined with a
divergence that ensures correct normalization of the proposal.
Figure~\ref{fig:appendix_only_ldr} illustrates what happens when only LDR is
used for training on an unbiased dataset. While the proposal matches the target
where the dataset has support, it is unconstrained outside of the data manifold.

The same problem occurs when running CMT (as well as TA-BG) with only LDR. Both
methods build buffers by reweighting model samples to an intermediate target
distribution. Since the LDR objective is evaluated over this reweighted fixed
dataset while training on each buffer, the same problem exists; the objective
does not constrain the model density outside the data manifold.

\begin{figure*}[htbp]
  \vskip 0.2in
  \begin{center}
  \centerline{\includegraphics{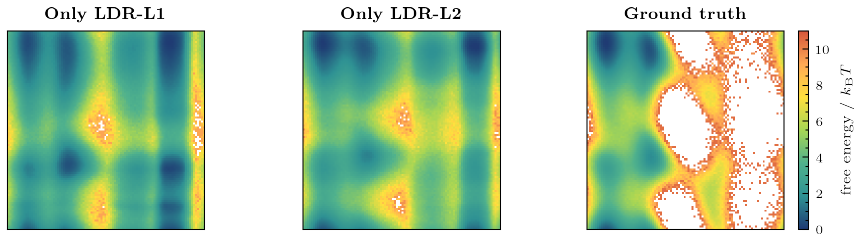}}
  \caption{Ramachandran obtained for alanine dipeptide when training with only
  LDR-L1 or only LDR-L2, without an additional data-based divergence.}
  \label{fig:appendix_only_ldr}
  \end{center}
\end{figure*}

\subsection{Additional Log-Dispersion Variants}
\label{sec:additional_LD}

In the main text of this work, we focused on the LDR-L1 and LDR-L2 variants from
the log-dispersion family. In Table~\ref{tab:results_additional_ld_variants}, we
compare the performance of additional variants when training on unbiased data
for alanine hexapeptide (\num{1e6} samples). This additionally includes LDR-L1.5
($p=1.5$), LDR-L3 ($p=3$), and LDR-Huber. For LDR-Huber, we use
\[
\rho(u)=
\begin{cases}
u^2, & |u|\le \delta,\\
2\delta |u|-\delta^2, & |u|>\delta,
\end{cases}
\]
with $\delta=1.0$ in our experiments.

\input{tables/results_additional_ld_variants.tex}

Table~\ref{tab:results_additional_ld_variants} shows that LDR-L1, LDR-L1.5, and
LDR-Huber achieve similar performance on this task. In contrast, LDR-L2 performs
worse than variants with weaker tail sensitivity, and LDR-L3 degrades performance
further. This supports the interpretation from the main text that penalties with
weaker sensitivity to large residuals can provide more stable optimization in the
heavy-tailed importance-weight regime of alanine hexapeptide.

This trend is also visible in Figure~\ref{fig:ld_variants_grad_norms}, where we
plot the variance of the L2 gradient norm inside a sliding window during
training. LDR-L1 exhibits the lowest gradient-norm variance, while LDR-L2 and
LDR-L3 show larger variation, indicating less stable training dynamics.

\begin{figure*}[htbp]
  \vskip 0.2in
  \begin{center}
  \centerline{\includegraphics{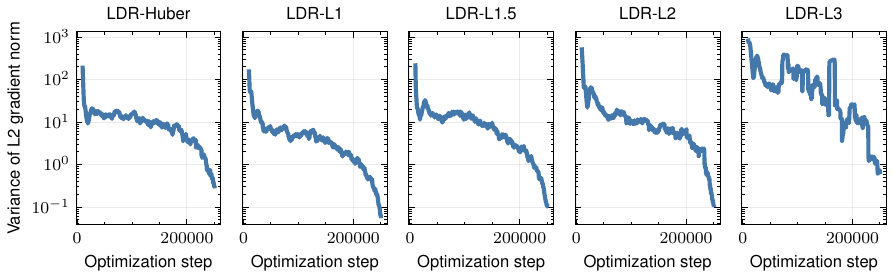}}
  \caption{Variance of L2 gradient norm inside a sliding window of \num{10000}
  optimization steps during training. We compare multiple variants from the
  log-dispersion family for training on unbiased data for alanine hexapeptide
  with \num{1e6} samples.}
  \label{fig:ld_variants_grad_norms}
  \end{center}
\end{figure*}

\subsection{Pushing the Limits of CMT} 
\label{sec:appendix:pushing_the_limits}
 
The LDR objective introduces an additional training signal, enabling effective
variational training with substantially fewer gradient descent steps and total
target energy evaluations. While our main experiments use \num{1e7} target
evaluations as the lowest setting of CMT for alanine dipeptide, this ablation
further reduces the evaluation budget.

\Cref{fig:cmt_fast} reports the final performance of CMT, CMT + LDR-L1, and CMT
+ LDR-L2 under a linear scaling of both the number of gradient descent steps and
the buffer size per annealing step, while fixing the number of annealing steps
at 200. Although performance degrades as the training budget is reduced, CMT
with log-dispersion regularization is significantly more robust to fewer
gradient descent steps and target energy evaluations than non-regularized CMT.
In particular, CMT + LDR-L1 reaches a final ESS of $82.98\,\%$ in approximately
$130$ minutes of wall-clock time on an NVIDIA A100 GPU (40\,GB memory) using
only $10^6$ target energy evaluations. Non-regularized CMT yields very poor
performance in this setting.

\begin{figure*}[htbp]
  \vskip 0.2in
  \begin{center}
  \centerline{\includegraphics{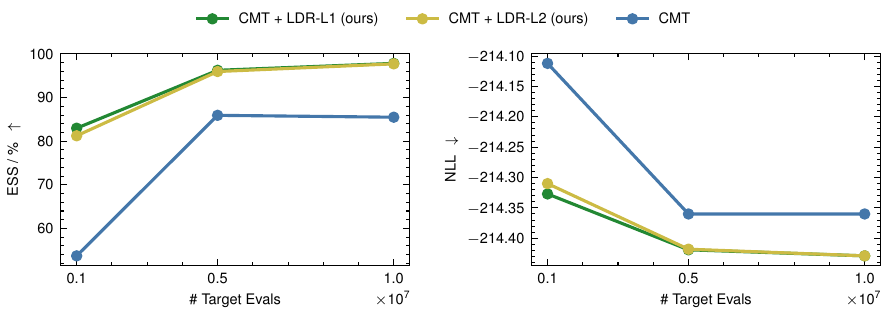}}
  \caption{Final ESS and NLL for a total of $10^6$, $5\times10^6$, and $10^7$ energy evaluations.}
  \label{fig:cmt_fast}
  \end{center}
\end{figure*}

\subsection{Connection to the Path-Gradient Forward KL}
\label{appendix:path_gradients}

Originally, path gradients were developed as a method to obtain a lower-variance
gradient estimator of the reverse KL objective
\cite{vaitlGradientsShouldStay2022}. \citet{vaitlFastUnifiedPath2023} noted that
the forward KL in data space can be interpreted as a reverse KL in the latent
space. This allows one to apply the path-gradient idea also to data-based
forward KL training. The objective that results from this approach is a
data-based objective that includes target gradient information.
\citet{vaitlFastUnifiedPath2023} hypothesized that the inclusion of target
gradients yields regularized training compared to the standard forward KL and
showed this empirically.

Both LDR and the path-gradient forward KL can be interpreted as
variance-reduction techniques applied at different levels: path gradients reduce
gradient estimator variance, while LDR reduces dispersion in the importance
weights directly. To achieve this, LDR leverages target energies (zero-order
regularizer), while the path-gradient forward KL uses target gradients
(first-order regularizer).

LDR is a flexible regularization framework that can be combined with any
data-based objective, including the path-gradient forward KL. Combining both
yields a doubly regularized objective: path-gradient forward KL provides a
first-order regularizer (target gradients), while LDR adds a zero-order
regularizer (target energies). Empirical evaluation of this combined approach is
an interesting direction for future work.

\paragraph{Comparison of the regularizing effect}
We directly compare the regularizing effect of forward KL + LDR with the path
gradient forward KL for the Gaussian mixture system and on alanine dipeptide
using Cartesian coordinates (see Table~\ref{tab:results_unbiased} and
Table~\ref{tab:results_tarflow_all}).

We did not observe stable training on the molecular tasks with internal
coordinates using neural spline flows. We tested several variants of gradient
norm clipping and tuned the learning rate extensively. We hypothesize that the
problem lies in the non-continuity of the gradients of the log-density of the
neural spline flow (due to the periodic handling of torsion angles), which makes
regularization using gradients of the target density difficult.

\clearpage
\section{Extended Results}
\label{appendix:extended_results}

\FloatBarrier

\input{tables/results_unbiased_all_GMM.tex} 
\input{tables/results_unbiased_all_molecular.tex}

\input{tables/results_biased_all.tex}

\input{tables/results_CMT_all.tex}

\input{tables/results_tarflow_all.tex}

\newpage
\section{Visualizations} 

\label{sec:appendix:visualizations}

\begin{figure*}[!htbp]
  \vskip 0.2in
  \begin{center}
  \centerline{\includegraphics{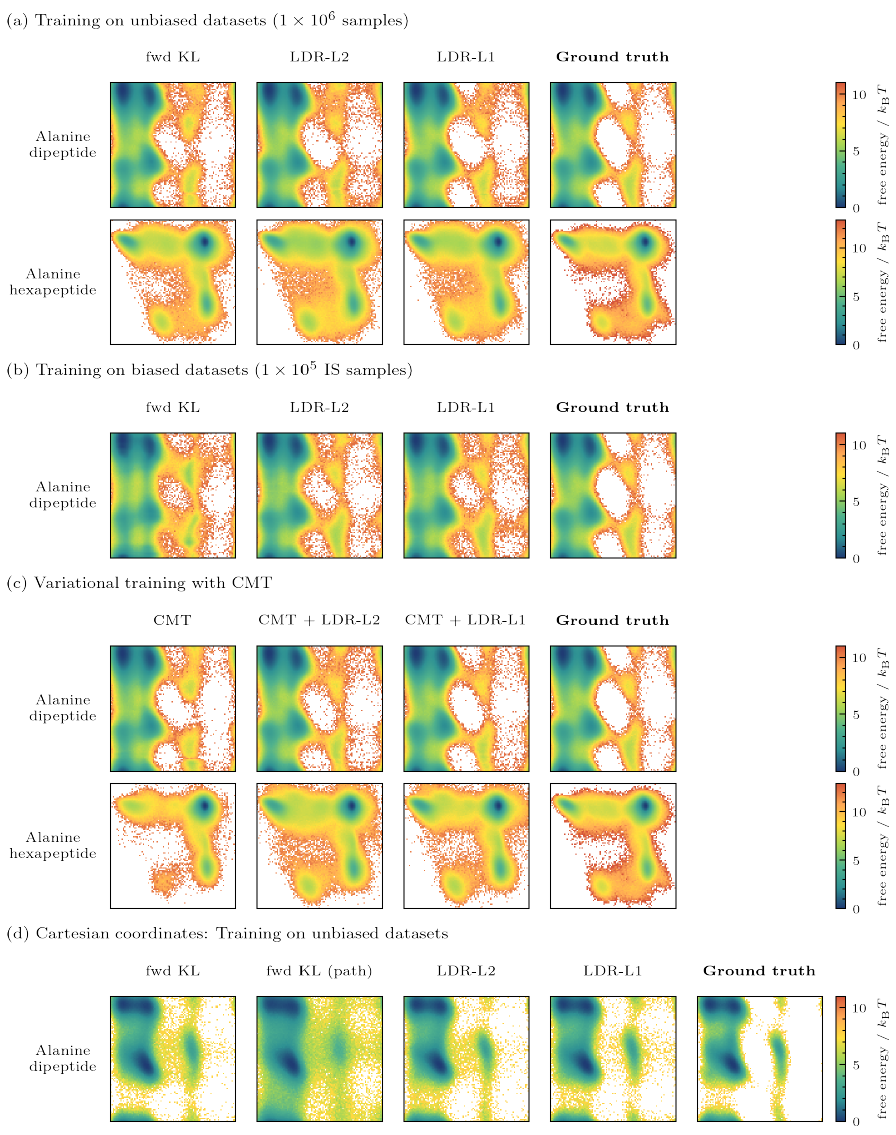}}
  \caption{Visualization of the 2D marginals of the main degrees of freedom for
  each system. For alanine dipeptide, we show the marginal of the two main
  dihedral angles (Ramachandran), for alanine hexapeptide the 2D TICA
  projection. (a) Training on unbiased datasets using \num{1e6} samples, (b)
  training on biased dataset using \num{1e5} IS samples, (c) variational
  training with CMT (\num{1e7} target evaluations for alanine dipeptide,
  \num{1e8} for alanine hexapeptide), (d) training in Cartesian coordinates on
  unbiased dataset (\num{1e5} samples).}
  \label{fig:all_ramachandran_tica_overview}
  \end{center}
\end{figure*}

\end{document}

%% file: tables/results_unbiased.tex
\begin{table*}
\caption{Results for training on unbiased datasets from the target distribution.
We report negative log-likelihood (NLL; lower is better) and effective sample size (ESS; higher is better), averaged over 4 independent experiments with standard deviations. 
For each number of samples, \textbf{bold} indicates metrics not significantly different from the best under 
uncorrected two-sided Welch's t-tests ($\alpha = 0.05$), capturing seed-level variability only. 
Right: ESS as a function of the number of training samples.}
\label{tab:results_unbiased}
\begin{center}
\begin{small}
\begin{sc}
\resizebox{\textwidth}{!}{\begin{tabular}{cclccc}
\toprule
System & {\#} Samples & Method & NLL $\downarrow$ & ESS / \% $\uparrow$ & \adjincludegraphics[valign=m]{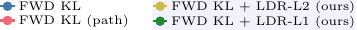} \\
\midrule
\multirow{3}{*}{GMM} & \multirow{4}{*}{\num{500}}  & fwd KL & \num{2.850(0.061)e0} & \num{77.76(6.90)} & \multirow{9}{*}{\adjincludegraphics[valign=m]{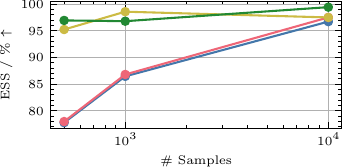}} \\
\multirow{8}{*}{\adjincludegraphics[valign=b,width=1.8cm]{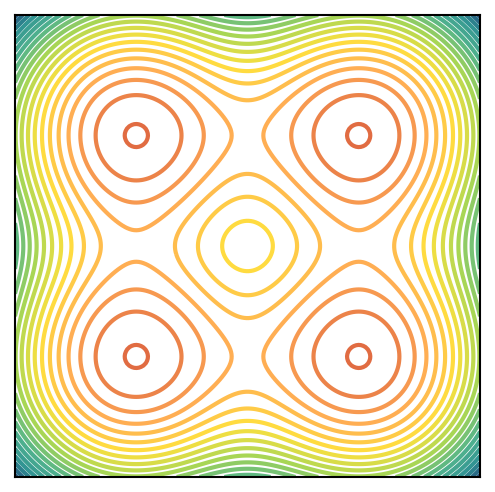}} & & fwd KL (path gradients) & \num{2.812(0.008)e0} & \num{77.97(5.32)} \\
                            &                                       & \cellcolor{blue!5}fwd KL + LDR-L2 (ours) & \cellcolor{blue!5}\textbf{\num{2.736(0.003)e0}} & \cellcolor{blue!5}\textbf{\num{95.26(2.48)}} \\
                            &                                       & \cellcolor{blue!5}fwd KL + LDR-L1 (ours) & \cellcolor{blue!5}\textbf{\num{2.734(0.004)e0}} & \cellcolor{blue!5}\textbf{\num{96.98(0.60)}} \\
\cmidrule(lr){2-5}
                            & \multirow{4}{*}{\num{1000}}  & fwd KL & \num{2.791(0.013)e0} & \num{86.47(1.43)} \\
                            &                                        & fwd KL (path gradients) & \num{2.777(0.027)e0} & \num{86.82(5.43)} \\
                            &                                        & \cellcolor{blue!5}fwd KL + LDR-L2 (ours) & \cellcolor{blue!5}\textbf{\num{2.725(0.001)e0}} & \cellcolor{blue!5}\textbf{\num{98.62(0.31)}} \\
                            &                                        & \cellcolor{blue!5}fwd KL + LDR-L1 (ours) & \cellcolor{blue!5}\textbf{\num{2.727(0.004)e0}} & \cellcolor{blue!5}\textbf{\num{96.83(2.60)}} \\
\addlinespace[0.1cm]
\midrule
\multirow{3}{*}{\shortstack[c]{Alanine \\ Dipeptide}} & \multirow{3}{*}{\shortstack[c]{\num{1e6}$\,^{\boldsymbol{\bigstar}}$}} & fwd KL & \num{-214.378(0.000)e0} & \num{87.94(0.06)} &
\multirow{7}{*}{\adjincludegraphics[valign=m]{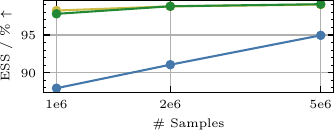}}
\\
\multirow{7}{*}{\adjincludegraphics[valign=b,width=2.7cm]{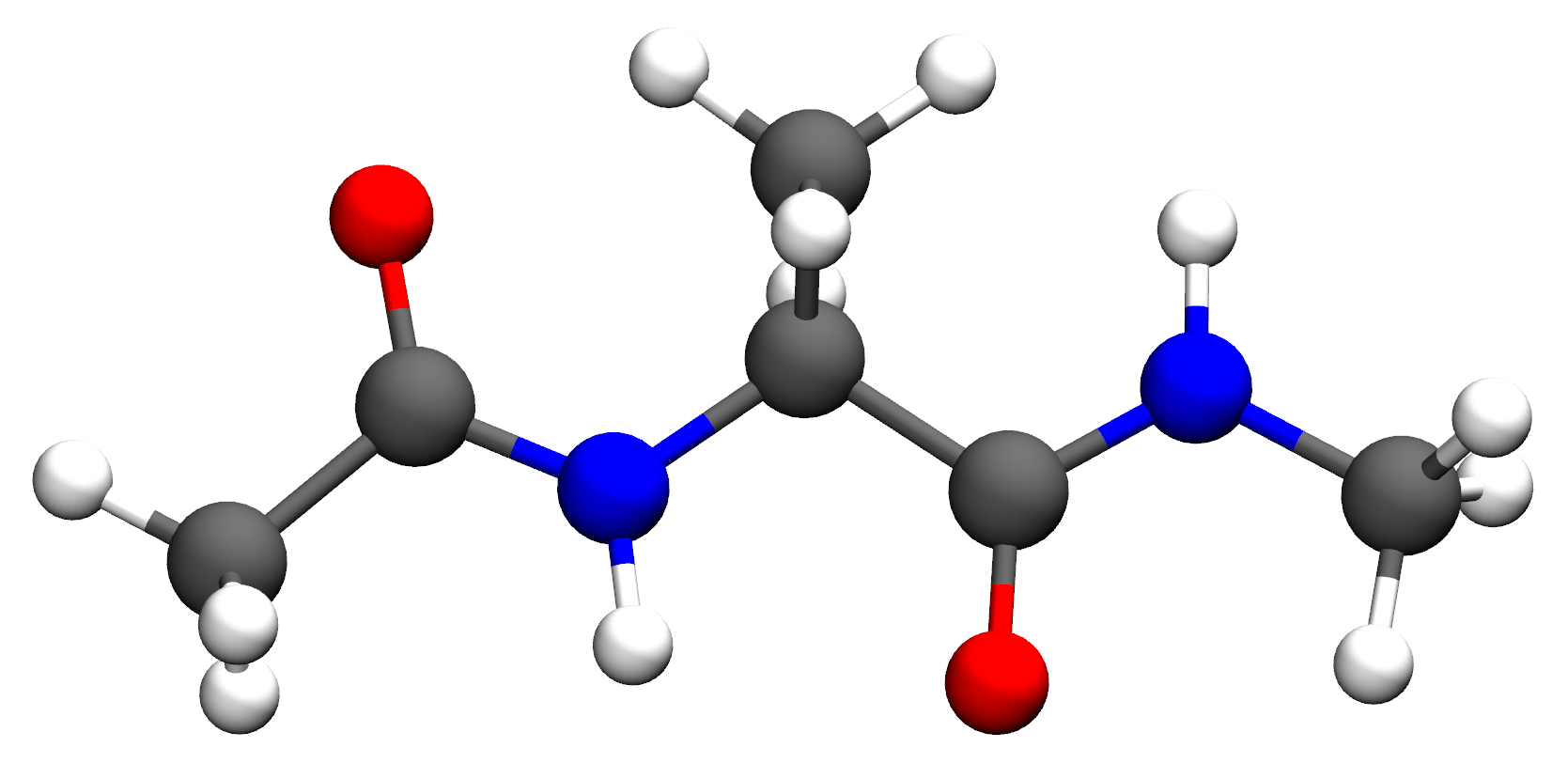}} &  & \cellcolor{blue!5}fwd KL + LDR-L2 (ours) & \cellcolor{blue!5}\textbf{\num{-214.432(0.001)e0}} & \cellcolor{blue!5}\textbf{\num{98.25(0.11)}} \\
                            &                                     & \cellcolor{blue!5}fwd KL + LDR-L1 (ours) & \cellcolor{blue!5}\num{-214.430(0.000)e0} & \cellcolor{blue!5}\num{97.81(0.07)} \\
\cmidrule(lr){2-5}
                            & \multirow{3}{*}{\shortstack[c]{\num{5e6}$\,^{\boldsymbol{\bigstar}}$}} & fwd KL & \num{-214.417(0.001)e0} & \num{94.96(0.14)} \\
                            &                                     & \cellcolor{blue!5}fwd KL + LDR-L2 (ours) & \cellcolor{blue!5}\textbf{\num{-214.437(0.000)e0}} & \cellcolor{blue!5}\textbf{\num{99.04(0.02)}} \\
                            &                                     & \cellcolor{blue!5}fwd KL + LDR-L1 (ours) & \cellcolor{blue!5}\textbf{\num{-214.437(0.001)e0}} & \cellcolor{blue!5}\textbf{\num{99.09(0.12)}} \\
\addlinespace[0.1cm]
\midrule
\multirow{3}{*}{\shortstack[c]{Alanine \\ Hexapeptide}} & \multirow{3}{*}{\shortstack[c]{\num{1e6}$\,^{\boldsymbol{\bigstar}}$}} & fwd KL & \num{-504.675(0.002)e0} & \num{13.10(0.11)} & \multirow{7}{*}{\adjincludegraphics[valign=m]{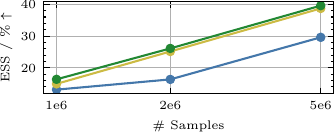}} \\
\multirow{7}{*}{\adjincludegraphics[valign=b,width=2.7cm]{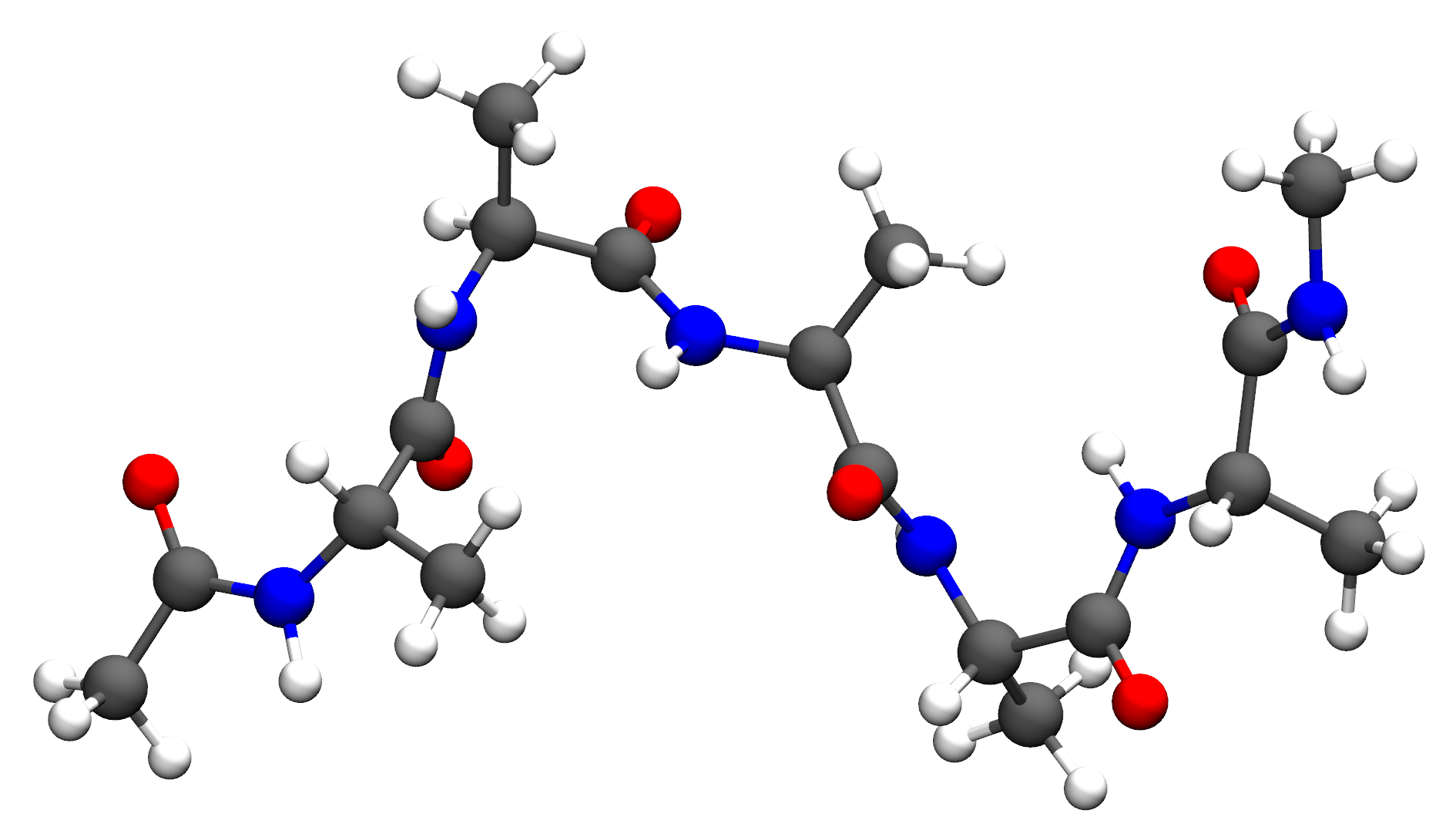}} &  & \cellcolor{blue!5}fwd KL + LDR-L2 (ours) & \cellcolor{blue!5}\num{-504.632(0.016)e0} & \cellcolor{blue!5}\num{14.92(0.23)} \\
                            &                                     & \cellcolor{blue!5}fwd KL + LDR-L1 (ours) & \cellcolor{blue!5}\textbf{\num{-504.759(0.013)e0}} & \cellcolor{blue!5}\textbf{\num{16.35(0.10)}} \\
\cmidrule(lr){2-5}
                            & \multirow{3}{*}{\shortstack[c]{\num{5e6}$\,^{\boldsymbol{\bigstar}}$}} & fwd KL & \num{-505.174(0.005)e0} & \num{29.60(0.25)} \\
                            &                                     & \cellcolor{blue!5}fwd KL + LDR-L2 (ours) & \cellcolor{blue!5}\num{-505.259(0.010)e0} & \cellcolor{blue!5}\num{38.70(0.55)} \\
                            &                                     & \cellcolor{blue!5}fwd KL + LDR-L1 (ours) & \cellcolor{blue!5}\textbf{\num{-505.308(0.007)e0}} & \cellcolor{blue!5}\textbf{\num{39.59(0.44)}} \\
\addlinespace[0.1cm]
\bottomrule
\end{tabular}}
\end{sc}
\end{small}
\end{center}
{\fontsize{7.5pt}{9pt}\selectfont $^{\boldsymbol{\bigstar}}$~We note that we performed $>\num{1e9}$ target evaluations in the MD simulations used to construct the listed datasets via downsampling (Appendix~\ref{appendix:datasets}).}
\vskip -0.1in
\end{table*}

%% file: tables/results_biased.tex
\begin{table*}
\caption{Results for training on biased datasets with importance sampling (IS).
We show results for pure data-based training with FWD KL, FWD KL + LDR on the IS samples only, and FWD KL + LDR on both the IS samples and the original biased dataset.
We report negative log-likelihood (NLL; lower is better) and effective sample size (ESS; higher is better), averaged over 4 independent experiments with standard deviations. 
\textbf{Bold} indicates metrics not significantly different from the best under 
uncorrected two-sided Welch's t-tests ($\alpha = 0.05$), capturing seed-level variability only. 
Right: ESS as a function of the number of samples drawn for importance sampling.}
\label{tab:results_biased}
\begin{center}
\begin{small}
\begin{sc}
\resizebox{\textwidth}{!}{\begin{tabular}{cclccc}
\toprule
System & {\#} IS Samples & Method & NLL $\downarrow$ & ESS / \% $\uparrow$ & \adjincludegraphics[valign=m]{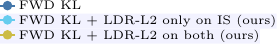} \\
\midrule
\addlinespace[0.32cm]
\multirow{2}{*}{\shortstack[c]{Alanine \\ Dipeptide}} & \multirow{6}{*}{\shortstack[c]{\num{1e5}}} & fwd KL & \num{-214.114(0.004)e0} & \num{51.42(0.42)} & \multirow{5}{*}{\adjincludegraphics[valign=m]{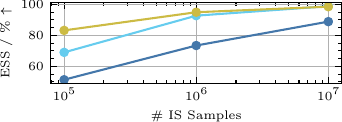}} \\
\multirow{6}{*}{\adjincludegraphics[valign=b,width=2.7cm]{./figures/molecules/aldp.png}} &  & \cellcolor{blue!5}fwd KL + LDR-L2 only on IS (ours) & \cellcolor{blue!5}\num{-214.264(0.001)e0} & \cellcolor{blue!5}\num{69.04(0.13)} \\
                                                                  &  & \cellcolor{blue!5}fwd KL + LDR-L1 only on IS (ours) & \cellcolor{blue!5}\num{-214.251(0.001)e0} & \cellcolor{blue!5}\num{66.94(0.24)} \\
                            &                                     & \cellcolor{blue!5}fwd KL + LDR-L2 on both (ours) & \cellcolor{blue!5}\textbf{\num{-214.353(0.001)e0}} & \cellcolor{blue!5}\textbf{\num{83.20(0.19)}} \\
                            &                                     & \cellcolor{blue!5}fwd KL + LDR-L1 on both (ours) & \cellcolor{blue!5}\num{-214.345(0.001)e0} & \cellcolor{blue!5}\num{82.30(0.13)} \\
\addlinespace[0.32cm]
\bottomrule
\end{tabular}}
\end{sc}
\end{small}
\end{center}
\vskip -0.1in
\end{table*}

%% file: tables/results_cmt.tex
\begin{table*}
\caption{Results for variational training without access to target samples, using only target energy evaluations.
We report negative log-likelihood (NLL; lower is better) and effective sample size (ESS; higher is better).
We average over 4 independent experiments with standard deviations for alanine dipeptide, and 3 independent experiments for alanine hexapeptide to limit computational cost.
For each number of target evaluations, \textbf{bold} indicates metrics not significantly different from the best under 
uncorrected two-sided Welch's t-tests ($\alpha = 0.05$), capturing seed-level variability only. 
Right: ESS as a function of the number of target energy evaluations.}
\label{tab:results_cmt}
\begin{center}
\begin{small}
\begin{sc}
\resizebox{\textwidth}{!}{
\begin{tabular}{clcccc}
\toprule
System & Method & {\#} Target evals $\downarrow$ & NLL $\downarrow$ & ESS / \% $\uparrow$ & \adjincludegraphics[valign=m]{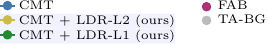} \\
\midrule
\addlinespace[0.15cm]
\multirow{2}{*}{\shortstack[c]{Alanine \\ Dipeptide}} & FAB & \num{2.13e8} & \num{-214.412(0.001)e0} & \num{94.95(0.12)} & \multirow{4}{*}{\adjincludegraphics[valign=m]{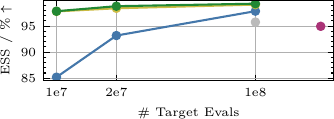}} \\
\multirow{6}{*}{\adjincludegraphics[valign=b,width=2.7cm]{./figures/molecules/aldp.png}} & TA-BG & \num{1e8} & \num{-214.419(0.002)e0} & \num{95.76(0.25)} \\
\cmidrule(lr){2-5}
                                                                  & CMT & \num{1e7} & \num{-214.358(0.003)e0} & \num{85.20(0.47)} \\
                                                                  & \cellcolor{blue!5}CMT + LDR-L2 (ours) & \cellcolor{blue!5}\num{1e7} & \cellcolor{blue!5}\textbf{\num{-214.429(0.001)e0}} & \cellcolor{blue!5}\textbf{\num{97.81(0.08)}} \\
                                                                  & \cellcolor{blue!5}CMT + LDR-L1 (ours) & \cellcolor{blue!5}\num{1e7} & \cellcolor{blue!5}\textbf{\num{-214.429(0.001)e0}} & \cellcolor{blue!5}\textbf{\num{97.84(0.05)}} \\
\midrule
\multirow{2}{*}{\shortstack[c]{Alanine \\ Hexapeptide}} & FAB & \num{4.2e8} & \num{-504.355(0.019)e0} & \num{14.41(0.27)} & \multirow{5}{*}{\adjincludegraphics[valign=m]{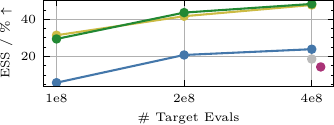}} \\
\multirow{6}{*}{\adjincludegraphics[valign=b,width=2.8cm]{./figures/molecules/hexa.png}} & TA-BG & \num{4e8} & \num{-504.782(0.019)e0} & \num{18.66(0.16)} \\
\cmidrule(lr){2-5}
                                                                    & CMT & \num{1e8} & \num{-503.190(0.051)e0} & \num{5.92(0.37)} \\
                                                                    & \cellcolor{blue!5}CMT + LDR-L2 (ours) & \cellcolor{blue!5}\num{1e8} & \cellcolor{blue!5}\textbf{\num{-504.801(0.008)e0}} & \cellcolor{blue!5}\textbf{\num{31.47(0.71)}} \\
                                                                    & \cellcolor{blue!5}CMT + LDR-L1 (ours) & \cellcolor{blue!5}\num{1e8} & \cellcolor{blue!5}\num{-504.721(0.010)e0} & \cellcolor{blue!5}\num{29.48(0.40)} \\
\addlinespace[0.15cm]
\bottomrule
\end{tabular}}
\end{sc}
\end{small}
\end{center}
\vskip -0.1in
\end{table*}

%% file: tables/results_tarflow.tex
\begin{table}[!htbp]
\caption{Results for training on Cartesian coordinates using an unbiased dataset from the target distribution of alanine dipeptide.
We average over 4 independent experiments with standard deviations. 
\textbf{Bold} indicates metrics not significantly different from the best under 
uncorrected two-sided Welch's t-tests ($\alpha = 0.05$), capturing seed-level variability only.}
\label{tab:results_tarflow}
\begin{center}
\begin{small}
\begin{sc}
\resizebox{\columnwidth}{!}{\begin{tabular}{lcccccc}
\toprule
Method & NLL $\downarrow$ & ESS / \% $\uparrow$\\
\midrule FWD KL {\normalfont\cite{tanScalableEquilibriumSampling2025a}} & \num{-219.583(0.051)e0} & \num{19.40(0.80)} \\
FWD KL$^{\boldsymbol{\bigstar}}$ & \num{-219.583(0.050)e0} & \num{27.32(1.28)} \\
\cellcolor{blue!5}FWD KL$^{\boldsymbol{\bigstar}}$ + LDR-L2 (ours) & \cellcolor{blue!5}\textbf{\num{-219.815(0.034)e0}} & \cellcolor{blue!5}\textbf{\num{35.89(1.69)}} \\
\cellcolor{blue!5}FWD KL$^{\boldsymbol{\bigstar}}$ + LDR-L1 (ours) & \cellcolor{blue!5}\textbf{\num{-219.800(0.044)e0}} & \cellcolor{blue!5}\textbf{\num{35.75(1.63)}} \\
\bottomrule
\end{tabular}}
\end{sc}
\end{small}
\end{center}
{\fontsize{7.5pt}{9pt}\selectfont $^{\boldsymbol{\bigstar}}$~Uses our improved augmentation correction, see Appendix~\ref{appendix:theory:com_augm}.}
\vskip -0.1in
\end{table}

%% file: tables/inference_speeds.tex
\begin{table}[!htbp]
\caption{Approximate sampling speed for different architecture / system
combinations, evaluated including the calculation of importance weights. For each
architecture / system combination, we increased the batch size until either
running out of memory or until the throughput did not increase further.
Benchmarks were performed on a single NVIDIA A100 GPU (80\,GB memory).}
\label{SI:tab_inference_speed}
\begin{center}
\begin{small}
\begin{sc}
\begin{tabular}{cccc}
\toprule
System & Architecture & Coordinate representation & Samples / hour \\
\midrule
Alanine Dipeptide & Spline Flow & \normalfont ICs & $\sim$ \num{2.8e8} \\
Alanine Hexapeptide & Spline Flow & \normalfont ICs & $\sim$ \num{1.1e8} \\
Alanine Dipeptide & TarFlow & \normalfont Cartesian & $\sim$ \num{2.8e7} \\
\midrule
Alanine Dipeptide & \normalfont \cite{garciasatorrasEquivariantNormalizingFlows2021} & \normalfont Cartesian & $\sim 7300$ \\
\bottomrule
\end{tabular}
\end{sc}
\end{small}
\end{center}
\vskip -0.1in
\end{table}

%% file: tables/params_architecture_systems.tex
\begin{table}[!htbp]
\caption{Summary of the number of trainable parameters for each architecture / system combination.}
\label{SI:tab_num_parameters}
\begin{center}
\begin{small}
\begin{sc}
\begin{tabular}{cccc}
\toprule
System & Architecture & Coordinate representation & No. parameters \\
\midrule
GMM & RealNVP & - & \num{403230} \\
Alanine Dipeptide & Spline Flow & \normalfont ICs & \num{7421512} \\
Alanine Hexapeptide & Spline Flow & \normalfont ICs & \num{12124616} \\
Alanine Dipeptide & TarFlow & \normalfont Cartesian & \num{12668952} \\
\bottomrule
\end{tabular}
\end{sc}
\end{small}
\end{center}
\vskip -0.1in
\end{table}

%% file: tables/training_times.tex
\begin{table}[!htbp]
\caption{Summary of the number of gradient descent steps and the total training time (excluding evaluation) for each method.}
\label{SI:tab_training_times}
\begin{center}
\begin{small}
\begin{sc}
\begin{tabular}{ccccc}
\toprule
System & Architecture (Rep.) & Method & No. gradient steps & Training time \\
\midrule
GMM & RealNVP & unbiased dataset & \num{10000} & $\sim$ \SI{4}{\minute} \\
Alanine dipeptide & Spline Flow (IC) & unbiased dataset & \num{250000} & $\sim$ \SI{11.8}{\hour} \\
Alanine hexapeptide & Spline Flow (IC) & unbiased dataset & \num{250000} & $\sim$ \SI{17.3}{\hour} \\
\midrule
Alanine dipeptide & Spline Flow (IC) & biased dataset & \num{250000} & $\sim$ \SI{12.6}{\hour} \\
\midrule
Alanine dipeptide & Spline Flow (IC) & CMT (\num{1e7} evals) & \num{400000} & $\sim$ \SI{17.9}{\hour} \\
Alanine hexapeptide & Spline Flow (IC) & CMT (\num{1e8} evals) & \num{800000} & $\sim$ \SI{53.0}{\hour} \\
\midrule
Alanine dipeptide & TarFlow (Cart.) & unbiased dataset & \num{400000} & $\sim$ \SI{18.6}{\hour} \\
\bottomrule
\end{tabular}
\end{sc}
\end{small}
\end{center}
\vskip -0.1in
\end{table}

%% file: tables/results_additional_ld_variants.tex
\begin{table*}[!htbp]
\caption{Comparison of additional log-dispersion variants for unbiased training on alanine hexapeptide using \num{1e6} samples.
We average over 4 independent experiments with standard deviations. 
\textbf{Bold} indicates metrics not significantly different from the best under 
uncorrected two-sided Welch's t-tests ($\alpha = 0.05$), capturing seed-level variability only.}
\label{tab:results_additional_ld_variants}
\begin{center}
\begin{small}
\begin{sc}
\begin{tabular}{lcc}
\toprule
Method & NLL $\downarrow$ & ESS / \% $\uparrow$\\
\midrule FWD KL & \num{-504.675(0.002)e0} & \num{13.10(0.11)} \\
FWD KL + LDR-L1 (ours) & \textbf{\num{-504.759(0.013)e0}} & \textbf{\num{16.35(0.10)}} \\
FWD KL + LDR-L1.5 (ours) & \textbf{\num{-504.750(0.007)e0}} & \textbf{\num{16.74(0.20)}} \\
FWD KL + LDR-L2 (ours) & \num{-504.632(0.016)e0} & \num{14.92(0.23)} \\
FWD KL + LDR-L3 (ours) & \num{-504.425(0.020)e0} & \num{8.95(0.26)} \\
FWD KL + LDR-Huber (ours) & \textbf{\num{-504.752(0.016)e0}} & \textbf{\num{16.88(0.40)}} \\
\bottomrule
\end{tabular}
\end{sc}
\end{small}
\end{center}
\vskip -0.1in
\end{table*}

%% file: tables/results_unbiased_all_GMM.tex
\begin{table*}[!htbp]
\caption{Results for training on unbiased datasets from the target distribution of the GMM system.
We average over 4 independent experiments with standard deviations. 
For each number of samples, \textbf{bold} indicates metrics not significantly different from the best under 
uncorrected two-sided Welch's t-tests ($\alpha = 0.05$), capturing seed-level variability only.}
\label{tab:results_unbiased_all_GMM}
\begin{center}
\begin{small}
\begin{sc}
\resizebox{0.5\textwidth}{!}{\begin{tabular}{cclcccccc}
\toprule
System & {\#} Samples & Method & NLL $\downarrow$ & ESS / \% $\uparrow$\\
\midrule \multirow{14}{*}{\shortstack[c]{GMM}} & \multirow{4}{*}{500} & FWD KL & \num{2.850(0.061)e0} & \num{77.76(6.90)} \\
& & FWD KL (PATH GRADIENTS) & \num{2.812(0.008)e0} & \num{77.97(5.32)} \\
& & \cellcolor{blue!5}FWD KL + LDR-L2 (ours) & \cellcolor{blue!5}\textbf{\num{2.736(0.003)e0}} & \cellcolor{blue!5}\textbf{\num{95.26(2.48)}} \\
& & \cellcolor{blue!5}FWD KL + LDR-L1 (ours) & \cellcolor{blue!5}\textbf{\num{2.734(0.004)e0}} & \cellcolor{blue!5}\textbf{\num{96.98(0.60)}} \\
\cmidrule(lr){2-5} & \multirow{4}{*}{1000} & FWD KL & \num{2.791(0.013)e0} & \num{86.47(1.43)} \\
& & FWD KL (PATH GRADIENTS) & \num{2.777(0.027)e0} & \num{86.82(5.43)} \\
& & \cellcolor{blue!5}FWD KL + LDR-L2 (ours) & \cellcolor{blue!5}\textbf{\num{2.725(0.001)e0}} & \cellcolor{blue!5}\textbf{\num{98.62(0.31)}} \\
& & \cellcolor{blue!5}FWD KL + LDR-L1 (ours) & \cellcolor{blue!5}\textbf{\num{2.727(0.004)e0}} & \cellcolor{blue!5}\textbf{\num{96.83(2.60)}} \\
\cmidrule(lr){2-5} & \multirow{4}{*}{10000} & FWD KL & \num{2.733(0.003)e0} & \num{96.75(0.46)} \\
& & FWD KL (PATH GRADIENTS) & \num{2.726(0.002)e0} & \textbf{\num{97.54(1.44)}} \\
& & \cellcolor{blue!5}FWD KL + LDR-L2 (ours) & \cellcolor{blue!5}\textbf{\num{2.730(0.014)e0}} & \cellcolor{blue!5}\textbf{\num{97.54(2.50)}} \\
& & \cellcolor{blue!5}FWD KL + LDR-L1 (ours) & \cellcolor{blue!5}\textbf{\num{2.720(0.001)e0}} & \cellcolor{blue!5}\textbf{\num{99.47(0.29)}} \\
\bottomrule
\end{tabular}}
\end{sc}
\end{small}
\end{center}
\vskip -0.1in
\end{table*}

%% file: tables/results_unbiased_all_molecular.tex
\begin{table*}[!htbp]
\caption{Results for training on unbiased datasets from the target distribution of alanine dipeptide and alanine hexapeptide.
We average over 4 independent experiments with standard deviations. 
For each number of samples, \textbf{bold} indicates metrics not significantly different from the best under 
uncorrected two-sided Welch's t-tests ($\alpha = 0.05$), capturing seed-level variability only.}
\label{tab:results_unbiased_all_molecular}
\begin{center}
\begin{small}
\begin{sc}
\resizebox{\textwidth}{!}{\begin{tabular}{cclccccccc}
\toprule
System & {\#} Samples & Method & NLL $\downarrow$ & ESS / \% $\uparrow$ & RAM KL $\downarrow$ & RAM KL W. RW $\downarrow$ & TICA KL $\downarrow$ & TICA KL W. RW $\downarrow$ & $\mathcal{E}$-$\mathcal{W}_2$\\
\midrule \multirow{11}{*}{\shortstack[c]{Alanine \\ Dipeptide}} & \multirow{3}{*}{\num{1e6}} & FWD KL & \num{-214.378(0.000)e0} & \num{87.94(0.06)} & \num{2.57(0.12)e-3} & \num{1.96(0.04)e-3} & \textbf{\num{1.18(0.18)e-3}} & \num{1.05(0.07)e-3} & \textbf{\num{5.99(1.47)e-3}} \\
& & \cellcolor{blue!5}FWD KL + LDR-L2 (ours) & \cellcolor{blue!5}\textbf{\num{-214.432(0.001)e0}} & \cellcolor{blue!5}\textbf{\num{98.25(0.11)}} & \cellcolor{blue!5}\textbf{\num{1.60(0.05)e-3}} & \cellcolor{blue!5}\textbf{\num{1.34(0.03)e-3}} & \cellcolor{blue!5}\textbf{\num{9.58(0.20)e-4}} & \cellcolor{blue!5}\textbf{\num{7.05(0.34)e-4}} & \cellcolor{blue!5}\textbf{\num{4.96(0.64)e-3}} \\
& & \cellcolor{blue!5}FWD KL + LDR-L1 (ours) & \cellcolor{blue!5}\num{-214.430(0.000)e0} & \cellcolor{blue!5}\num{97.81(0.07)} & \cellcolor{blue!5}\textbf{\num{1.64(0.04)e-3}} & \cellcolor{blue!5}\num{1.55(0.03)e-3} & \cellcolor{blue!5}\textbf{\num{9.71(0.91)e-4}} & \cellcolor{blue!5}\num{8.40(0.87)e-4} & \cellcolor{blue!5}\textbf{\num{5.88(0.51)e-3}} \\
\cmidrule(lr){2-10} & \multirow{3}{*}{\num{2e6}} & FWD KL & \num{-214.396(0.001)e0} & \num{91.06(0.22)} & \num{1.79(0.05)e-3} & \num{1.68(0.06)e-3} & \textbf{\num{8.74(0.48)e-4}} & \num{9.21(0.42)e-4} & \textbf{\num{6.32(1.93)e-3}} \\
& & \cellcolor{blue!5}FWD KL + LDR-L2 (ours) & \cellcolor{blue!5}\textbf{\num{-214.435(0.001)e0}} & \cellcolor{blue!5}\textbf{\num{98.80(0.11)}} & \cellcolor{blue!5}\textbf{\num{1.42(0.04)e-3}} & \cellcolor{blue!5}\textbf{\num{1.18(0.04)e-3}} & \cellcolor{blue!5}\textbf{\num{9.10(0.94)e-4}} & \cellcolor{blue!5}\textbf{\num{6.51(0.31)e-4}} & \cellcolor{blue!5}\textbf{\num{6.32(1.17)e-3}} \\
& & \cellcolor{blue!5}FWD KL + LDR-L1 (ours) & \cellcolor{blue!5}\textbf{\num{-214.435(0.000)e0}} & \cellcolor{blue!5}\textbf{\num{98.81(0.04)}} & \cellcolor{blue!5}\textbf{\num{1.48(0.02)e-3}} & \cellcolor{blue!5}\num{1.36(0.04)e-3} & \cellcolor{blue!5}\textbf{\num{8.62(0.49)e-4}} & \cellcolor{blue!5}\num{7.18(0.34)e-4} & \cellcolor{blue!5}\textbf{\num{6.00(1.86)e-3}} \\
\cmidrule(lr){2-10} & \multirow{3}{*}{\num{5e6}} & FWD KL & \num{-214.417(0.001)e0} & \num{94.96(0.14)} & \num{1.49(0.02)e-3} & \num{1.42(0.03)e-3} & \textbf{\num{7.21(0.20)e-4}} & \textbf{\num{7.17(0.32)e-4}} & \textbf{\num{5.83(1.54)e-3}} \\
& & \cellcolor{blue!5}FWD KL + LDR-L2 (ours) & \cellcolor{blue!5}\textbf{\num{-214.437(0.000)e0}} & \cellcolor{blue!5}\textbf{\num{99.04(0.02)}} & \cellcolor{blue!5}\textbf{\num{1.34(0.03)e-3}} & \cellcolor{blue!5}\textbf{\num{1.28(0.03)e-3}} & \cellcolor{blue!5}\num{7.87(0.42)e-4} & \cellcolor{blue!5}\textbf{\num{6.92(0.26)e-4}} & \cellcolor{blue!5}\textbf{\num{5.23(0.44)e-3}} \\
& & \cellcolor{blue!5}FWD KL + LDR-L1 (ours) & \cellcolor{blue!5}\textbf{\num{-214.437(0.001)e0}} & \cellcolor{blue!5}\textbf{\num{99.09(0.12)}} & \cellcolor{blue!5}\textbf{\num{1.38(0.04)e-3}} & \cellcolor{blue!5}\textbf{\num{1.29(0.04)e-3}} & \cellcolor{blue!5}\num{7.94(0.13)e-4} & \cellcolor{blue!5}\textbf{\num{6.91(0.26)e-4}} & \cellcolor{blue!5}\textbf{\num{6.37(1.06)e-3}} \\
\midrule \multirow{11}{*}{\shortstack[c]{Alanine \\ Hexapeptide}} & \multirow{3}{*}{\num{1e6}} & FWD KL & \num{-504.675(0.002)e0} & \num{13.10(0.11)} & \textbf{\num{3.38(0.52)e-3}} & \num{6.50(0.18)e-3} & \textbf{\num{4.47(0.60)e-3}} & \num{1.04(0.06)e-2} & \num{6.93(0.93)e-2} \\
& & \cellcolor{blue!5}FWD KL + LDR-L2 (ours) & \cellcolor{blue!5}\num{-504.632(0.016)e0} & \cellcolor{blue!5}\num{14.92(0.23)} & \cellcolor{blue!5}\num{1.29(0.09)e-2} & \cellcolor{blue!5}\textbf{\num{4.31(0.04)e-3}} & \cellcolor{blue!5}\num{2.37(0.22)e-2} & \cellcolor{blue!5}\textbf{\num{7.01(0.30)e-3}} & \cellcolor{blue!5}\textbf{\num{4.75(1.19)e-2}} \\
& & \cellcolor{blue!5}FWD KL + LDR-L1 (ours) & \cellcolor{blue!5}\textbf{\num{-504.759(0.013)e0}} & \cellcolor{blue!5}\textbf{\num{16.35(0.10)}} & \cellcolor{blue!5}\num{5.04(0.30)e-3} & \cellcolor{blue!5}\num{4.89(0.13)e-3} & \cellcolor{blue!5}\num{8.49(0.40)e-3} & \cellcolor{blue!5}\num{7.86(0.23)e-3} & \cellcolor{blue!5}\textbf{\num{5.52(1.54)e-2}} \\
\cmidrule(lr){2-10} & \multirow{3}{*}{\num{2e6}} & FWD KL & \num{-504.813(0.008)e0} & \num{16.33(0.19)} & \num{3.06(0.10)e-3} & \num{5.80(0.17)e-3} & \textbf{\num{4.29(0.23)e-3}} & \num{9.36(0.26)e-3} & \textbf{\num{5.67(0.56)e-2}} \\
& & \cellcolor{blue!5}FWD KL + LDR-L2 (ours) & \cellcolor{blue!5}\num{-505.034(0.007)e0} & \cellcolor{blue!5}\num{25.13(0.24)} & \cellcolor{blue!5}\num{3.66(0.17)e-3} & \cellcolor{blue!5}\textbf{\num{3.26(0.03)e-3}} & \cellcolor{blue!5}\num{5.95(0.38)e-3} & \cellcolor{blue!5}\textbf{\num{5.24(0.09)e-3}} & \cellcolor{blue!5}\textbf{\num{5.17(0.69)e-2}} \\
& & \cellcolor{blue!5}FWD KL + LDR-L1 (ours) & \cellcolor{blue!5}\textbf{\num{-505.066(0.001)e0}} & \cellcolor{blue!5}\textbf{\num{26.10(0.12)}} & \cellcolor{blue!5}\textbf{\num{2.84(0.12)e-3}} & \cellcolor{blue!5}\num{3.35(0.03)e-3} & \cellcolor{blue!5}\textbf{\num{4.52(0.13)e-3}} & \cellcolor{blue!5}\num{5.50(0.06)e-3} & \cellcolor{blue!5}\textbf{\num{4.71(0.87)e-2}} \\
\cmidrule(lr){2-10} & \multirow{3}{*}{\num{5e6}} & FWD KL & \num{-505.174(0.005)e0} & \num{29.60(0.25)} & \num{1.89(0.04)e-3} & \num{3.39(0.01)e-3} & \textbf{\num{2.73(0.12)e-3}} & \num{5.39(0.07)e-3} & \textbf{\num{3.66(0.63)e-2}} \\
& & \cellcolor{blue!5}FWD KL + LDR-L2 (ours) & \cellcolor{blue!5}\num{-505.259(0.010)e0} & \cellcolor{blue!5}\num{38.70(0.55)} & \cellcolor{blue!5}\num{3.30(0.38)e-3} & \cellcolor{blue!5}\textbf{\num{2.44(0.05)e-3}} & \cellcolor{blue!5}\num{5.64(0.51)e-3} & \cellcolor{blue!5}\textbf{\num{4.26(0.10)e-3}} & \cellcolor{blue!5}\textbf{\num{4.18(0.28)e-2}} \\
& & \cellcolor{blue!5}FWD KL + LDR-L1 (ours) & \cellcolor{blue!5}\textbf{\num{-505.308(0.007)e0}} & \cellcolor{blue!5}\textbf{\num{39.59(0.44)}} & \cellcolor{blue!5}\textbf{\num{1.73(0.05)e-3}} & \cellcolor{blue!5}\num{2.63(0.05)e-3} & \cellcolor{blue!5}\textbf{\num{2.90(0.09)e-3}} & \cellcolor{blue!5}\num{4.49(0.10)e-3} & \cellcolor{blue!5}\textbf{\num{4.38(0.73)e-2}} \\
\bottomrule
\end{tabular}}
\end{sc}
\end{small}
\end{center}
\vskip -0.1in
\end{table*}

%% file: tables/results_biased_all.tex
\begin{table*}[!htbp]
\caption{Results for training on biased datasets using importance sampling.
We average over 4 independent experiments with standard deviations. 
For each number of IS samples, \textbf{bold} indicates metrics not significantly different from the best under 
uncorrected two-sided Welch's t-tests ($\alpha = 0.05$), capturing seed-level variability only.}
\label{tab:results_biased_all}
\begin{center}
\begin{small}
\begin{sc}
\resizebox{\textwidth}{!}{\begin{tabular}{cclccccccc}
\toprule
System & {\#} IS Samples & Method & NLL $\downarrow$ & ESS / \% $\uparrow$ & RAM KL $\downarrow$ & RAM KL W. RW $\downarrow$ & TICA KL $\downarrow$ & TICA KL W. RW $\downarrow$ & $\mathcal{E}$-$\mathcal{W}_2$\\
\midrule \multirow{17}{*}{\shortstack[c]{Alanine \\ Dipeptide}} & \multirow{5}{*}{\num{1e5}} & FWD KL & \num{-214.114(0.004)e0} & \num{51.42(0.42)} & \num{1.61(0.07)e-2} & \num{2.21(0.29)e-3} & \num{4.54(0.34)e-3} & \num{1.19(0.11)e-3} & \num{1.25(0.40)e-2} \\
& & \cellcolor{blue!5}FWD KL + LDR-L2 only on IS (ours) & \cellcolor{blue!5}\num{-214.264(0.001)e0} & \cellcolor{blue!5}\num{69.04(0.13)} & \cellcolor{blue!5}\num{7.17(0.69)e-3} & \cellcolor{blue!5}\num{1.49(0.09)e-3} & \cellcolor{blue!5}\num{2.79(0.36)e-3} & \cellcolor{blue!5}\num{7.71(0.32)e-4} & \cellcolor{blue!5}\num{7.53(0.80)e-3} \\
& & \cellcolor{blue!5}FWD KL + LDR-L1 only on IS (ours) & \cellcolor{blue!5}\num{-214.251(0.001)e0} & \cellcolor{blue!5}\num{66.94(0.24)} & \cellcolor{blue!5}\num{6.69(0.15)e-3} & \cellcolor{blue!5}\num{1.62(0.11)e-3} & \cellcolor{blue!5}\num{2.93(0.28)e-3} & \cellcolor{blue!5}\num{8.22(0.60)e-4} & \cellcolor{blue!5}\textbf{\num{1.01(0.30)e-2}} \\
& & \cellcolor{blue!5}FWD KL + LDR-L2 on both (ours) & \cellcolor{blue!5}\textbf{\num{-214.353(0.001)e0}} & \cellcolor{blue!5}\textbf{\num{83.20(0.19)}} & \cellcolor{blue!5}\textbf{\num{2.57(0.11)e-3}} & \cellcolor{blue!5}\num{1.29(0.03)e-3} & \cellcolor{blue!5}\textbf{\num{1.22(0.06)e-3}} & \cellcolor{blue!5}\num{6.77(0.44)e-4} & \cellcolor{blue!5}\textbf{\num{5.76(0.80)e-3}} \\
& & \cellcolor{blue!5}FWD KL + LDR-L1 on both (ours) & \cellcolor{blue!5}\num{-214.345(0.001)e0} & \cellcolor{blue!5}\num{82.30(0.13)} & \cellcolor{blue!5}\num{3.26(0.19)e-3} & \cellcolor{blue!5}\textbf{\num{1.23(0.03)e-3}} & \cellcolor{blue!5}\num{1.51(0.19)e-3} & \cellcolor{blue!5}\textbf{\num{6.10(0.13)e-4}} & \cellcolor{blue!5}\textbf{\num{7.00(1.94)e-3}} \\
\cmidrule(lr){2-10} & \multirow{5}{*}{\num{1e6}} & FWD KL & \num{-214.287(0.001)e0} & \num{73.45(0.15)} & \num{5.75(0.32)e-3} & \num{2.56(0.24)e-3} & \num{2.30(0.26)e-3} & \num{1.37(0.20)e-3} & \textbf{\num{6.50(1.46)e-3}} \\
& & \cellcolor{blue!5}FWD KL + LDR-L2 only on IS (ours) & \cellcolor{blue!5}\num{-214.406(0.000)e0} & \cellcolor{blue!5}\num{92.69(0.03)} & \cellcolor{blue!5}\num{1.88(0.02)e-3} & \cellcolor{blue!5}\textbf{\num{1.36(0.07)e-3}} & \cellcolor{blue!5}\num{1.10(0.03)e-3} & \cellcolor{blue!5}\num{7.37(0.31)e-4} & \cellcolor{blue!5}\textbf{\num{6.26(1.07)e-3}} \\
& & \cellcolor{blue!5}FWD KL + LDR-L1 only on IS (ours) & \cellcolor{blue!5}\num{-214.406(0.000)e0} & \cellcolor{blue!5}\num{93.38(0.03)} & \cellcolor{blue!5}\num{1.73(0.02)e-3} & \cellcolor{blue!5}\textbf{\num{1.27(0.04)e-3}} & \cellcolor{blue!5}\num{1.04(0.03)e-3} & \cellcolor{blue!5}\textbf{\num{6.73(0.28)e-4}} & \cellcolor{blue!5}\textbf{\num{5.38(0.66)e-3}} \\
& & \cellcolor{blue!5}FWD KL + LDR-L2 on both (ours) & \cellcolor{blue!5}\textbf{\num{-214.416(0.000)e0}} & \cellcolor{blue!5}\num{94.91(0.05)} & \cellcolor{blue!5}\textbf{\num{1.61(0.04)e-3}} & \cellcolor{blue!5}\textbf{\num{1.31(0.05)e-3}} & \cellcolor{blue!5}\textbf{\num{8.48(0.71)e-4}} & \cellcolor{blue!5}\textbf{\num{6.86(0.65)e-4}} & \cellcolor{blue!5}\textbf{\num{5.49(1.24)e-3}} \\
& & \cellcolor{blue!5}FWD KL + LDR-L1 on both (ours) & \cellcolor{blue!5}\num{-214.416(0.000)e0} & \cellcolor{blue!5}\textbf{\num{95.28(0.05)}} & \cellcolor{blue!5}\textbf{\num{1.60(0.01)e-3}} & \cellcolor{blue!5}\textbf{\num{1.28(0.01)e-3}} & \cellcolor{blue!5}\textbf{\num{8.61(0.34)e-4}} & \cellcolor{blue!5}\textbf{\num{6.42(0.22)e-4}} & \cellcolor{blue!5}\textbf{\num{6.78(1.84)e-3}} \\
\cmidrule(lr){2-10} & \multirow{5}{*}{\num{1e7}} & FWD KL & \num{-214.383(0.000)e0} & \num{88.85(0.03)} & \num{1.91(0.05)e-3} & \num{1.64(0.04)e-3} & \num{8.97(0.36)e-4} & \num{8.91(0.27)e-4} & \textbf{\num{6.08(1.49)e-3}} \\
& & \cellcolor{blue!5}FWD KL + LDR-L2 only on IS (ours) & \cellcolor{blue!5}\textbf{\num{-214.436(0.000)e0}} & \cellcolor{blue!5}\textbf{\num{98.78(0.01)}} & \cellcolor{blue!5}\textbf{\num{1.37(0.05)e-3}} & \cellcolor{blue!5}\textbf{\num{1.30(0.04)e-3}} & \cellcolor{blue!5}\textbf{\num{8.24(0.49)e-4}} & \cellcolor{blue!5}\textbf{\num{7.17(0.48)e-4}} & \cellcolor{blue!5}\textbf{\num{5.29(1.19)e-3}} \\
& & \cellcolor{blue!5}FWD KL + LDR-L1 only on IS (ours) & \cellcolor{blue!5}\num{-214.434(0.000)e0} & \cellcolor{blue!5}\num{98.44(0.01)} & \cellcolor{blue!5}\textbf{\num{1.37(0.05)e-3}} & \cellcolor{blue!5}\textbf{\num{1.25(0.05)e-3}} & \cellcolor{blue!5}\textbf{\num{8.07(0.32)e-4}} & \cellcolor{blue!5}\textbf{\num{6.73(0.39)e-4}} & \cellcolor{blue!5}\textbf{\num{5.63(1.53)e-3}} \\
& & \cellcolor{blue!5}FWD KL + LDR-L2 on both (ours) & \cellcolor{blue!5}\num{-214.434(0.000)e0} & \cellcolor{blue!5}\num{98.48(0.02)} & \cellcolor{blue!5}\textbf{\num{1.41(0.05)e-3}} & \cellcolor{blue!5}\num{1.34(0.05)e-3} & \cellcolor{blue!5}\textbf{\num{8.26(0.52)e-4}} & \cellcolor{blue!5}\textbf{\num{6.99(0.20)e-4}} & \cellcolor{blue!5}\textbf{\num{6.31(2.38)e-3}} \\
& & \cellcolor{blue!5}FWD KL + LDR-L1 on both (ours) & \cellcolor{blue!5}\num{-214.431(0.000)e0} & \cellcolor{blue!5}\num{97.97(0.03)} & \cellcolor{blue!5}\num{1.45(0.04)e-3} & \cellcolor{blue!5}\textbf{\num{1.31(0.04)e-3}} & \cellcolor{blue!5}\textbf{\num{8.11(0.22)e-4}} & \cellcolor{blue!5}\textbf{\num{6.73(0.21)e-4}} & \cellcolor{blue!5}\textbf{\num{6.64(1.43)e-3}} \\
\bottomrule
\end{tabular}}
\end{sc}
\end{small}
\end{center}
\vskip -0.1in
\end{table*}

%% file: tables/results_CMT_all.tex
\begin{table*}[!htbp]
\caption{Results for variational training without data from the target distribution.
We average over 4 independent experiments with standard deviations for alanine dipeptide and 3 independent experiments for alanine hexapeptide.
For each number of target evaluations, \textbf{bold} indicates metrics not significantly different from the best under 
uncorrected two-sided Welch's t-tests ($\alpha = 0.05$), capturing seed-level variability only.}
\label{tab:results_CMT_all}
\begin{center}
\begin{small}
\begin{sc}
\resizebox{\textwidth}{!}{\begin{tabular}{cclcccccccc}
\toprule
System & {\#} Target Evals & Method & NLL $\downarrow$ & ESS / \% $\uparrow$ & RAM KL $\downarrow$ & RAM KL W. RW $\downarrow$ & TICA KL $\downarrow$ & TICA KL W. RW $\downarrow$ & $\mathcal{E}$-$\mathcal{W}_2$\\
\midrule \multirow{3}{*}{} & \multirow{1}{*}{\num{2.13e8}} & FAB & \num{-214.412(0.001)e0} & \num{94.95(0.12)} & \num{1.51(0.04)e-3} & \num{1.22(0.03)e-3} & \num{3.05(0.40)e-3} & \num{6.30(0.12)e-4} & \num{5.89(0.77)e-3} \\
\multirow{3}{*}{} & \multirow{1}{*}{\num{1e8}} & TA-BG & \num{-214.419(0.002)e0} & \num{95.76(0.25)} & \num{1.91(0.13)e-3} & \num{1.34(0.06)e-3} & \num{2.72(0.80)e-3} & \num{1.08(0.17)e-3} & \num{8.70(1.71)e-3} \\
\cmidrule(lr){2-10}\multirow{11}{*}{\shortstack[c]{Alanine \\ Dipeptide}} & \multirow{3}{*}{\num{1e7}} & CMT & \num{-214.358(0.003)e0} & \num{85.20(0.47)} & \num{3.78(0.38)e-3} & \num{2.33(0.23)e-3} & \num{2.16(0.71)e-3} & \num{1.17(0.12)e-3} & \textbf{\num{5.36(1.20)e-3}} \\
& & \cellcolor{blue!5}CMT + LDR-L2 (ours) & \cellcolor{blue!5}\textbf{\num{-214.429(0.001)e0}} & \cellcolor{blue!5}\textbf{\num{97.81(0.08)}} & \cellcolor{blue!5}\textbf{\num{1.62(0.05)e-3}} & \cellcolor{blue!5}\textbf{\num{1.41(0.04)e-3}} & \cellcolor{blue!5}\textbf{\num{9.36(0.07)e-4}} & \cellcolor{blue!5}\textbf{\num{7.43(0.40)e-4}} & \cellcolor{blue!5}\textbf{\num{5.66(0.26)e-3}} \\
& & \cellcolor{blue!5}CMT + LDR-L1 (ours) & \cellcolor{blue!5}\textbf{\num{-214.429(0.001)e0}} & \cellcolor{blue!5}\textbf{\num{97.84(0.05)}} & \cellcolor{blue!5}\textbf{\num{1.57(0.09)e-3}} & \cellcolor{blue!5}\textbf{\num{1.42(0.08)e-3}} & \cellcolor{blue!5}\textbf{\num{9.47(0.44)e-4}} & \cellcolor{blue!5}\textbf{\num{7.46(0.70)e-4}} & \cellcolor{blue!5}\textbf{\num{5.70(0.65)e-3}} \\
\cmidrule(lr){2-10} & \multirow{3}{*}{\num{2e7}} & CMT & \num{-214.405(0.001)e0} & \num{93.18(0.19)} & \num{1.70(0.03)e-3} & \num{1.52(0.03)e-3} & \textbf{\num{1.40(0.33)e-3}} & \textbf{\num{7.94(0.68)e-4}} & \textbf{\num{4.84(0.90)e-3}} \\
& & \cellcolor{blue!5}CMT + LDR-L2 (ours) & \cellcolor{blue!5}\num{-214.432(0.000)e0} & \cellcolor{blue!5}\num{98.41(0.03)} & \cellcolor{blue!5}\textbf{\num{1.38(0.03)e-3}} & \cellcolor{blue!5}\textbf{\num{1.29(0.06)e-3}} & \cellcolor{blue!5}\num{1.95(0.10)e-3} & \cellcolor{blue!5}\num{9.73(0.55)e-4} & \cellcolor{blue!5}\textbf{\num{6.08(1.21)e-3}} \\
& & \cellcolor{blue!5}CMT + LDR-L1 (ours) & \cellcolor{blue!5}\textbf{\num{-214.435(0.001)e0}} & \cellcolor{blue!5}\textbf{\num{98.81(0.09)}} & \cellcolor{blue!5}\textbf{\num{1.47(0.10)e-3}} & \cellcolor{blue!5}\textbf{\num{1.29(0.10)e-3}} & \cellcolor{blue!5}\textbf{\num{1.56(0.47)e-3}} & \cellcolor{blue!5}\textbf{\num{8.96(1.75)e-4}} & \cellcolor{blue!5}\num{8.37(1.48)e-3} \\
\cmidrule(lr){2-10} & \multirow{3}{*}{\num{1e8}} & CMT & \num{-214.431(0.000)e0} & \num{97.85(0.05)} & \textbf{\num{1.46(0.04)e-3}} & \num{1.39(0.03)e-3} & \textbf{\num{1.78(0.04)e-3}} & \textbf{\num{1.09(0.04)e-3}} & \textbf{\num{6.75(2.05)e-3}} \\
& & \cellcolor{blue!5}CMT + LDR-L2 (ours) & \cellcolor{blue!5}\textbf{\num{-214.437(0.001)e0}} & \cellcolor{blue!5}\num{99.11(0.06)} & \cellcolor{blue!5}\textbf{\num{1.39(0.04)e-3}} & \cellcolor{blue!5}\textbf{\num{1.36(0.03)e-3}} & \cellcolor{blue!5}\textbf{\num{1.94(0.42)e-3}} & \cellcolor{blue!5}\textbf{\num{1.11(0.08)e-3}} & \cellcolor{blue!5}\textbf{\num{7.65(2.06)e-3}} \\
& & \cellcolor{blue!5}CMT + LDR-L1 (ours) & \cellcolor{blue!5}\textbf{\num{-214.438(0.000)e0}} & \cellcolor{blue!5}\textbf{\num{99.29(0.04)}} & \cellcolor{blue!5}\textbf{\num{1.42(0.03)e-3}} & \cellcolor{blue!5}\textbf{\num{1.32(0.02)e-3}} & \cellcolor{blue!5}\textbf{\num{1.81(0.05)e-3}} & \cellcolor{blue!5}\textbf{\num{1.12(0.07)e-3}} & \cellcolor{blue!5}\textbf{\num{8.92(0.64)e-3}} \\
\midrule \multirow{3}{*}{} & \multirow{1}{*}{\num{4.2e8}} & FAB & \num{-504.355(0.019)e0} & \num{14.41(0.27)} & \num{2.15(0.12)e-2} & \num{1.02(0.05)e-2} & \num{3.62(0.06)e-2} & \num{1.57(0.07)e-2} & \num{3.38(1.25)e-2} \\
\multirow{3}{*}{} & \multirow{1}{*}{\num{4e8}} & TA-BG & \num{-504.782(0.019)e0} & \num{18.66(0.16)} & \num{6.74(1.29)e-3} & \num{6.74(0.89)e-3} & \num{8.19(1.17)e-3} & \num{1.14(0.10)e-2} & \num{1.60(0.49)e-2} \\
\cmidrule(lr){2-10}\multirow{11}{*}{\shortstack[c]{Alanine \\ Hexapeptide}} & \multirow{3}{*}{\num{1e8}} & CMT & \num{-503.190(0.051)e0} & \num{5.92(0.37)} & \num{6.72(0.73)e-2} & \num{5.23(0.51)e-2} & \num{1.56(0.45)e-1} & \textbf{\num{1.07(0.43)e-1}} & \textbf{\num{8.30(1.78)e-2}} \\
& & \cellcolor{blue!5}CMT + LDR-L2 (ours) & \cellcolor{blue!5}\textbf{\num{-504.801(0.008)e0}} & \cellcolor{blue!5}\textbf{\num{31.47(0.71)}} & \cellcolor{blue!5}\textbf{\num{1.61(0.08)e-2}} & \cellcolor{blue!5}\textbf{\num{1.56(0.23)e-2}} & \cellcolor{blue!5}\textbf{\num{1.33(0.06)e-2}} & \cellcolor{blue!5}\textbf{\num{1.58(0.08)e-2}} & \cellcolor{blue!5}\textbf{\num{5.92(1.28)e-2}} \\
& & \cellcolor{blue!5}CMT + LDR-L1 (ours) & \cellcolor{blue!5}\num{-504.721(0.010)e0} & \cellcolor{blue!5}\num{29.48(0.40)} & \cellcolor{blue!5}\textbf{\num{1.71(0.03)e-2}} & \cellcolor{blue!5}\textbf{\num{1.65(0.05)e-2}} & \cellcolor{blue!5}\textbf{\num{1.30(0.15)e-2}} & \cellcolor{blue!5}\textbf{\num{1.77(0.14)e-2}} & \cellcolor{blue!5}\textbf{\num{6.35(0.33)e-2}} \\
\cmidrule(lr){2-10} & \multirow{3}{*}{\num{2e8}} & CMT & \num{-503.545(0.364)e0} & \num{20.85(0.65)} & \num{9.08(3.16)e-2} & \textbf{\num{8.32(3.59)e-2}} & \num{1.89(0.19)e-1} & \num{1.41(0.24)e-1} & \textbf{\num{3.96(1.52)e-2}} \\
& & \cellcolor{blue!5}CMT + LDR-L2 (ours) & \cellcolor{blue!5}\num{-505.012(0.004)e0} & \cellcolor{blue!5}\num{41.71(0.47)} & \cellcolor{blue!5}\textbf{\num{9.05(0.65)e-3}} & \cellcolor{blue!5}\num{1.02(0.03)e-2} & \cellcolor{blue!5}\textbf{\num{8.81(0.50)e-3}} & \cellcolor{blue!5}\textbf{\num{1.12(0.01)e-2}} & \cellcolor{blue!5}\textbf{\num{3.32(0.54)e-2}} \\
& & \cellcolor{blue!5}CMT + LDR-L1 (ours) & \cellcolor{blue!5}\textbf{\num{-505.051(0.013)e0}} & \cellcolor{blue!5}\textbf{\num{43.66(0.69)}} & \cellcolor{blue!5}\textbf{\num{7.78(0.06)e-3}} & \cellcolor{blue!5}\textbf{\num{8.30(0.28)e-3}} & \cellcolor{blue!5}\textbf{\num{8.65(0.28)e-3}} & \cellcolor{blue!5}\textbf{\num{1.07(0.04)e-2}} & \cellcolor{blue!5}\textbf{\num{2.96(0.64)e-2}} \\
\cmidrule(lr){2-10} & \multirow{3}{*}{\num{4e8}} & CMT & \num{-504.770(0.037)e0} & \num{23.95(0.34)} & \num{1.39(0.07)e-2} & \num{1.27(0.06)e-2} & \num{1.54(0.08)e-2} & \num{1.88(0.05)e-2} & \num{6.17(0.55)e-2} \\
& & \cellcolor{blue!5}CMT + LDR-L2 (ours) & \cellcolor{blue!5}\textbf{\num{-505.193(0.007)e0}} & \cellcolor{blue!5}\textbf{\num{47.76(0.31)}} & \cellcolor{blue!5}\textbf{\num{5.08(0.12)e-3}} & \cellcolor{blue!5}\textbf{\num{5.52(0.06)e-3}} & \cellcolor{blue!5}\textbf{\num{6.53(0.07)e-3}} & \cellcolor{blue!5}\textbf{\num{7.07(0.10)e-3}} & \cellcolor{blue!5}\textbf{\num{1.22(0.19)e-2}} \\
& & \cellcolor{blue!5}CMT + LDR-L1 (ours) & \cellcolor{blue!5}\textbf{\num{-505.219(0.019)e0}} & \cellcolor{blue!5}\textbf{\num{48.32(0.66)}} & \cellcolor{blue!5}\textbf{\num{5.06(0.94)e-3}} & \cellcolor{blue!5}\textbf{\num{5.88(0.70)e-3}} & \cellcolor{blue!5}\textbf{\num{6.25(0.58)e-3}} & \cellcolor{blue!5}\textbf{\num{7.82(0.72)e-3}} & \cellcolor{blue!5}\textbf{\num{1.48(0.29)e-2}} \\
\bottomrule
\end{tabular}}
\end{sc}
\end{small}
\end{center}
\vskip -0.1in
\end{table*}

%% file: tables/results_tarflow_all.tex
\begin{table*}[!htbp]
\caption{Results for training on Cartesian coordinates using an unbiased dataset from the target distribution of alanine dipeptide.
We average over 4 independent experiments with standard deviations.
\textbf{Bold} indicates metrics not significantly different from the best under 
uncorrected two-sided Welch's t-tests ($\alpha = 0.05$), capturing seed-level variability only.}
\label{tab:results_tarflow_all}
\begin{center}
\begin{small}
\begin{sc}
\resizebox{0.7\textwidth}{!}{\begin{tabular}{lcccccc}
\toprule
Method & NLL $\downarrow$ & ESS / \% $\uparrow$ & RAM KL $\downarrow$ & RAM KL W. RW $\downarrow$ & $\mathcal{E}$-$\mathcal{W}_2$\\
\midrule FWD KL {\normalfont\cite{tanScalableEquilibriumSampling2025a}} & \num{-219.583(0.051)e0} & \num{19.40(0.80)} & \num{2.93(0.29)e-2} & \num{2.87(0.06)e-2} & \num{1.51(0.20)e-1} \\
FWD KL$^{\boldsymbol{\bigstar}}$ & \num{-219.583(0.050)e0} & \num{27.32(1.28)} & \num{2.93(0.30)e-2} & \num{2.51(0.08)e-2} & \num{1.43(0.09)e-1} \\
FWD KL$^{\boldsymbol{\bigstar}}$ (Path) & \num{-218.528(0.090)e0} & \num{10.31(0.86)} & \num{1.04(0.11)e-1} & \num{4.77(0.95)e-2} & \num{1.77(0.15)e-1} \\
\cellcolor{blue!5}FWD KL$^{\boldsymbol{\bigstar}}$ + LDR-L2 (ours) & \cellcolor{blue!5}\textbf{\num{-219.815(0.034)e0}} & \cellcolor{blue!5}\textbf{\num{35.89(1.69)}} & \cellcolor{blue!5}\textbf{\num{2.50(0.35)e-2}} & \cellcolor{blue!5}\textbf{\num{1.79(0.10)e-2}} & \cellcolor{blue!5}\textbf{\num{9.27(2.20)e-2}} \\
\cellcolor{blue!5}FWD KL$^{\boldsymbol{\bigstar}}$ + LDR-L1 (ours) & \cellcolor{blue!5}\textbf{\num{-219.800(0.044)e0}} & \cellcolor{blue!5}\textbf{\num{35.75(1.63)}} & \cellcolor{blue!5}\textbf{\num{2.16(0.08)e-2}} & \cellcolor{blue!5}\textbf{\num{2.07(0.23)e-2}} & \cellcolor{blue!5}\textbf{\num{1.15(0.33)e-1}} \\
\bottomrule
\end{tabular}}
\end{sc}
\end{small}
\end{center}
{\fontsize{7.5pt}{9pt}\selectfont $^{\boldsymbol{\bigstar}}$~Uses our improved augmentation correction, see Appendix~\ref{appendix:theory:com_augm}.}
\vskip -0.1in
\end{table*}